\newtheorem{theorem}{Theorem}
\title{PIP: Perturbation-based Iterative Pruning for Large Language Models}
\author{
 \textbf{Yi Cao\textsuperscript{1}} \hspace{1em}
 \textbf{Wei-Jie Xu\textsuperscript{2}} \hspace{1em}
 \textbf{Yucheng Shen\textsuperscript{1}} \hspace{1em}
 \\
 \textbf{Weijie Shi\textsuperscript{3}} \hspace{1em}
 \textbf{Chi-Min Chan\textsuperscript{3}} \hspace{1em}
 \textbf{Jianfeng Qu\textsuperscript{1}} \hspace{1em}
 \textbf{Jiajie Xu\textsuperscript{1}\thanks{
Corresponding author: xujj@suda.edu.cn}}
\\
 \textsuperscript{1}School of Computer Science and Technology, Soochow University \\
 \textsuperscript{2}School of Artificial Intelligence, Nanjing University \\
 \textsuperscript{3}Department of Computer Science and Engineering, Hong Kong University of Science and \\ Technology
}
\begin{document}
\maketitle
\begin{abstract}
The rapid increase in the parameter counts of Large Language Models (LLMs), which often reach into the billions or even trillions, presents significant challenges for their practical deployment, particularly in resource-constrained environments. To address this issue, we propose PIP (Perturbation-based Iterative Pruning), a novel double-view structured pruning method to optimize LLMs, which combines information from two different views: the unperturbed view and the perturbed view. With the calculation of gradient differences, PIP iteratively prunes those that struggle to distinguish between these two views. Our experiments show that PIP reduces the parameter count by approximately 20\% while retaining over 85\% of the original model's accuracy across varied benchmarks. In some cases, the performance of the pruned model is within 5\% of the unpruned version, demonstrating PIP's ability to preserve key aspects of model effectiveness. Moreover, PIP consistently outperforms existing state-of-the-art (SOTA) structured pruning methods, establishing it as a leading technique for optimizing LLMs in constrained environments.
\end{abstract}

\section{Introduction}
Large Language Models (LLMs) \cite{achiam2023gpt4,dubey2024llama} based on the Transformer architecture \cite{vaswani2017transformer} have demonstrated impressive capabilities across a wide range of tasks, but their capabilities come at the expense of massive parameter counts and high computational requirements \cite{kaplan2020scalinglaw}. For illustration, the LLaMA3-405B model \cite{dubey2024llama}, with about 405 billion parameters, demands at least 810 GB of memory with 11 A100 GPUs when using half-precision (FP16) format. Therefore, an issue presents itself that warrants further exploration: Can we produce a smaller, general-purpose, and competitive LLM by leveraging existing pre-trained LLMs, while using much less compute than training one from scratch? 
\cite{Xia2023ShearedLA} 

To this end, researchers have been exploring strategies like pruning~\cite{Frantar2023SparseGPT,ma2023llm,ashkboos2024slicegpt}, quantization~\cite{bai2021quantizationbinarybert,Ji2024quantizationAWQ}, knowledge distillation~\cite{sun2020kdcontrastive,pan2021kdmeta}, and low-rank factorization~\cite{saha2023lowrankmatrix,yuan2023lowrankasvd}. Among them, pruning methods have gained considerable attention due to their potential to significantly reduce model size while preserving performance. Yet, traditional pruning strategies typically assess importance through isolated metrics such as weight magnitudes or input-output similarity~\cite{men2024shortgpt}. While intuitive, these single-view approaches suffer from fundamental flaws: they overlook the capacity to preserve semantic robustness under adversarial or natural input variations, which is essential for ensuring reliable deployment of models in real-world applications.

To overcome the limitation, we propose a novel double-view approach that evaluates layer importance based on their awareness of text perturbations. For each input, we generate two complementary perspectives: an original sample and its perturbed counterpart—crafted via character-level edits that preserve syntax but maximally distort meaning (Section~\ref{subsection:perturbation}). By contrasting parameter gradients between these views through first-order Taylor approximation, we identify layers exhibiting weak semantic discrimination.

Beyond the initial double-view comparison, our approach employs an iterative gradient reassessment strategy to further refine the pruning process. After pruning the least sensitive layers identified in each cycle, we proceed to recompute gradient differences on the updated architecture. This dynamic process, which is akin to curriculum learning, progressively focuses on layers that are critical for semantic stability. By doing so, it ensures a thorough and comprehensive importance evaluation through successive approximations.

Our contributions are summarized as follows:
\begin{itemize}
\item
We introduce PIP, which is a structured pruning approach designed to iteratively remove low-importance layers identified by PertImport (detailed in Section \ref{subsection:pertimport}) and recomputes gradients on the pruned architecture. PIP can be seamlessly integrated into popular LLM frameworks, such as Hugging Face, with minimal code modifications, offering a lightweight yet theoretically sound implementation.

\item 
Through extensive experiments, we demonstrate PIP's consistent superiority over current state-of-the-art structured pruning methods. Ablation studies confirm that both perturbation (preserving semantic integrity) and the iteration process (dynamic importance reassessment) are essential for high-accuracy pruning. Additionally, comprehensive analyses further provide actionable insights for performance.

\end{itemize}

\begin{figure*}[tb]
    \centering
    \begin{subfigure}[b]{0.3\textwidth}
        \includegraphics[width=\textwidth]{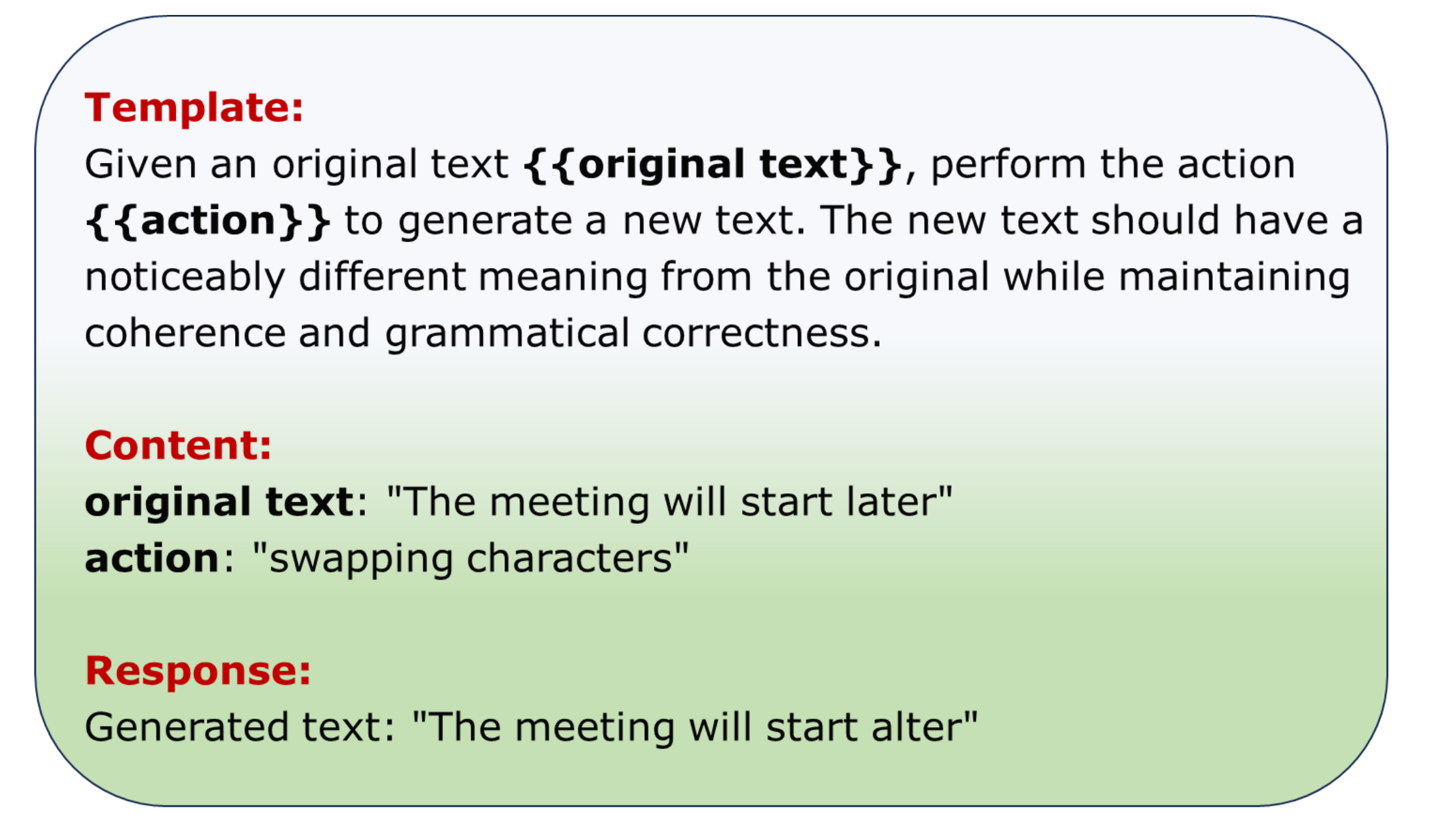}
        \caption{Character Swap}
        \label{fig:swap characters}
    \end{subfigure}
    \hfill
    \begin{subfigure}[b]{0.3\textwidth}
        \includegraphics[width=\textwidth]{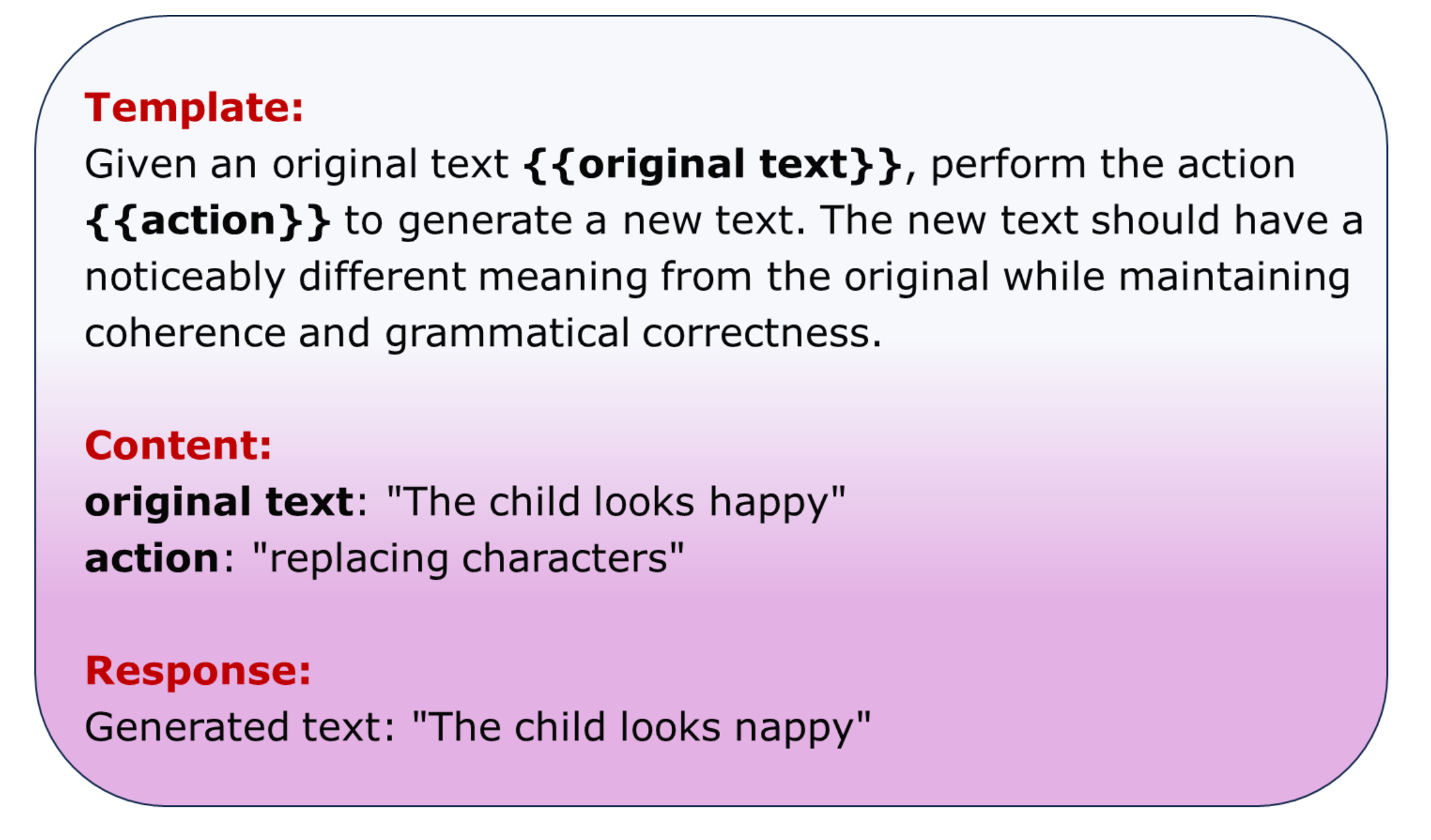}
        \caption{Character Replacement}
        \label{fig:replace characters}
    \end{subfigure}
    \hfill
    \begin{subfigure}[b]{0.3\textwidth}
        \includegraphics[width=\textwidth]{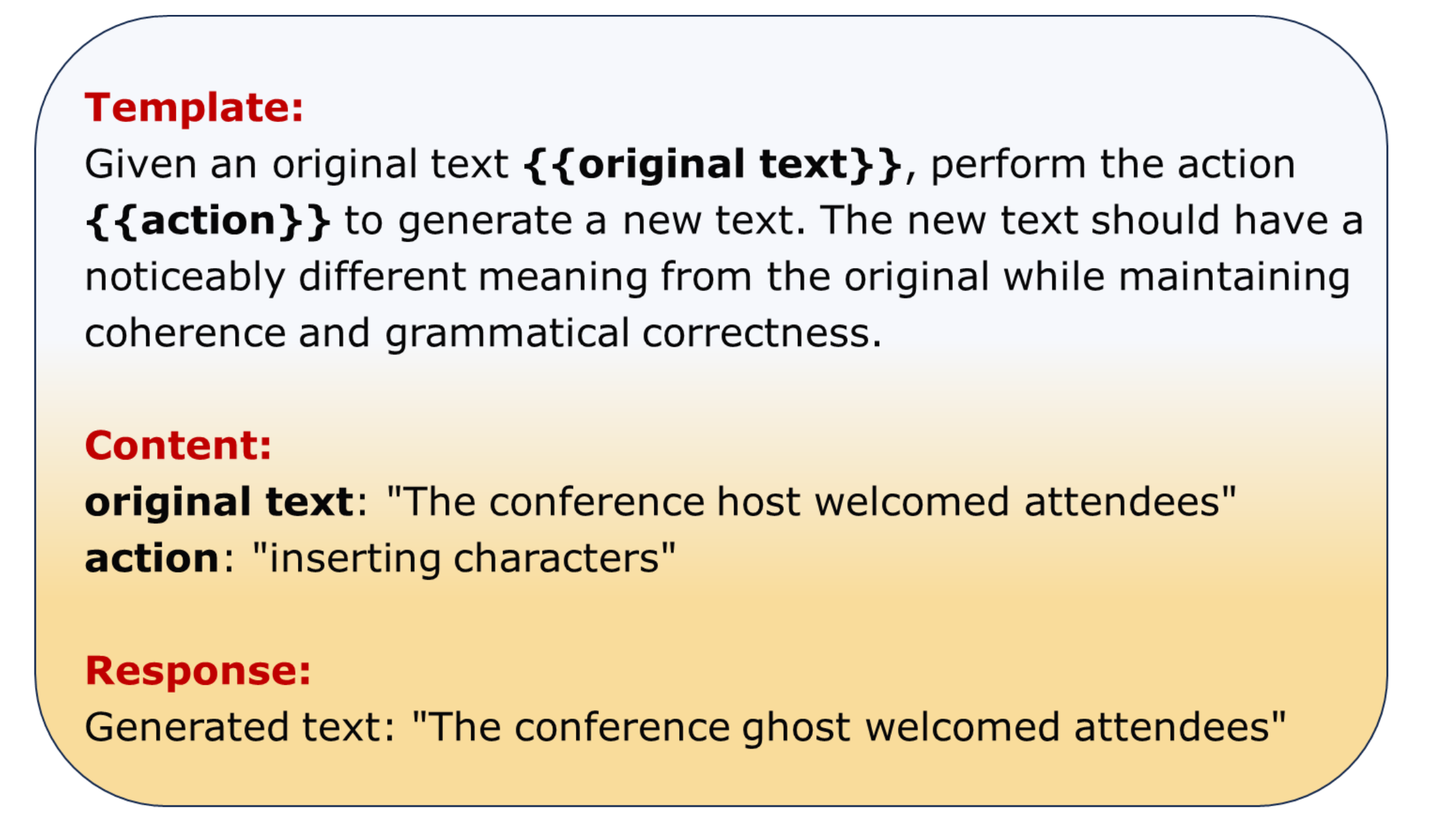}
        \caption{Character Insertion}
        \label{fig:insert characters}
    \end{subfigure}
    \caption{Generation of perturbed texts via auxiliary LLM.}
    \label{fig:generate_perturbed_samples}
\end{figure*}

\section{Related Work}
Pruning techniques for LLMs can generally be classified into two categories: unstructured pruning and structured pruning. Unstructured pruning sparsifies weight matrices by setting individual elements to zero, which often requires specialized hardware support. Notable works in this area include~\cite{Frantar2023SparseGPT,sun2023wanda}. 

Structured pruning, on the other hand, focuses on eliminating predefined units within the model, which makes it more compatible with hardware constraints. The concept of structured pruning for LLMs is introduced by \cite{wang2019structuredpruning}, which proposes parameterizing each weight matrix via low-rank factorization and actually adaptively removing rank-1 components during training. This pioneering work has truly led to the development of several other methods, such as \cite{xia2022Cofi} and \cite{Davy2024TailoredLLaMA}. These methods, however, are primarily designed for compression within specific domains or tasks, falling under the category of task-specific compression. While effective for their intended applications, they often limit the versatility of LLMs as general task solvers.

In contrast, \cite{ma2023llm} introduces a genetic pruning framework called LLM-Pruner, which aims to maintain task-agnostic capabilities while minimizing reliance on the original training dataset. Following the pipeline proposed by \cite{kwon2022pruningpipeline}, LLM-Pruner consists of three stages: Discovery, Estimation, and Recovery. It selectively removes non-critical coupled structures based on dependency analysis \cite{fang2023depgraph}, preserving the core functionality of the model. However, its potential integration with LoRA \cite{hu2021lora} presents several challenges in achieving an optimal balance between efficiency and performance.

Inspired by the approach of LLM-Pruner, the research community has proposed several methods for structured pruning in general tasks. These methods can be broadly categorized into two main types: width pruning and depth pruning \cite{kim2024shortenedllama}. Width pruning focuses on compressing the weight matrix by reducing its hidden dimension, while depth pruning targets the pruning of layers or blocks within the model. For example, ShearedLLaMA \cite{Xia2023ShearedLA} implements structured pruning through a combination of targeted pruning and dynamic batch loading. Targeted pruning removes specific layers of the model in an end-to-end fashion to achieve a predefined compression ratio. Dynamic batch loading adjusts the composition of training data batches based on the varying losses from different domains. Although this method achieves competitive performance, it suffers from the same retraining challenges as LLM-Pruner \cite{ma2023llm}. 

To avoid retraining, which can be resource-intensive and time-consuming, \cite{men2024shortgpt} proposes ShortGPT, a method based on layer importance. It introduces a novel importance metric called Block Influence, which quantifies the importance of each layer by calculating the similarity between the inputs and outputs of each layer. Layers with low importance scores are then removed. Similarly, \cite{kim2024shortenedllama} proposes Shortened LLaMA, a block-importance-based method that removes blocks based on a block-level importance metric. Another related work, SLEB \cite{song2024sleb}  evaluates the importance of Transformer blocks using the similarity between inputs and outputs, and removes the blocks with low importance scores. While these methods are straightforward to understand and implement, they fail to provide strong empirical results and lack rigorous theoretical support. Moreover, these single-view approaches are inherently limited as they neglect the necessity to maintain semantic robustness under adversarial or natural input variations, which is essential for reliable deployment.

In summary, while existing pruning methods offer trade-offs in terms of model efficiency and performance, they often either require retraining or lack solid guarantees in theory, limiting their applicability to real-world scenarios.

\section{Methodology}

\subsection{Text Perturbation}
\label{subsection:perturbation}

Text perturbation is a data augmentation technique \cite{Guerrero2023AdversarialTP} that introduces variability into textual data by applying a suite of carefully designed transformations to the original text samples.

Inspired by adversarial training \cite{Ganin2015DomainAdversarialTO}, we design text perturbations that cause radical semantic shifts while preserving grammatical correctness. Using LLM-powered prompt templates (Figure \ref{fig:generate_perturbed_samples}), we propose methods generating perturbed text samples that challenge robustness:
\subsubsection{Character Swap}
\begin{itemize}
    \item \textbf{Example:} Swapping the characters ``l'' and ``a'' in \textit{``later''} gives us \textit{``alter''}. Consequently, the sentence \textit{``The meeting will start later''} becomes \textit{``The meeting will start alter''}.
    \item \textbf{Impact:} In scheduling systems, this perturbation causes rescheduling forms to be generated instead of acknowledging delays, disrupting calendar management. In Q\&A systems, models respond with operational directives like ``How to adjust the meeting?'' instead of factual answers, spreading  incorrect procedural guidance and increasing inefficiencies.
\end{itemize}
\subsubsection{Character Replacement}
\begin{itemize}
    \item \textbf{Example:} Replacing the character ``h'' with ``n'' in \textit{``happy''} results in \textit{``nappy''}. As a result, the sentence \textit{``The child looks happy''} becomes \textit{``The child looks nappy''}.
    \item \textbf{Impact:} In dialogue systems, this perturbation leads to inappropriate suggestions like ``Check diaper supplies'' instead of emotional support, causing nonsensical interactions in childcare applications. In healthcare chatbots, it can misinterpret ``The patient feels nappy'' as a clinical symptom, leading to incorrect medical advice and eroding trust in systems.
\end{itemize}

\begin{figure*}[tb]
\hspace{-1.5em}
\includegraphics[width=1.05\textwidth,height=0.6\textwidth]{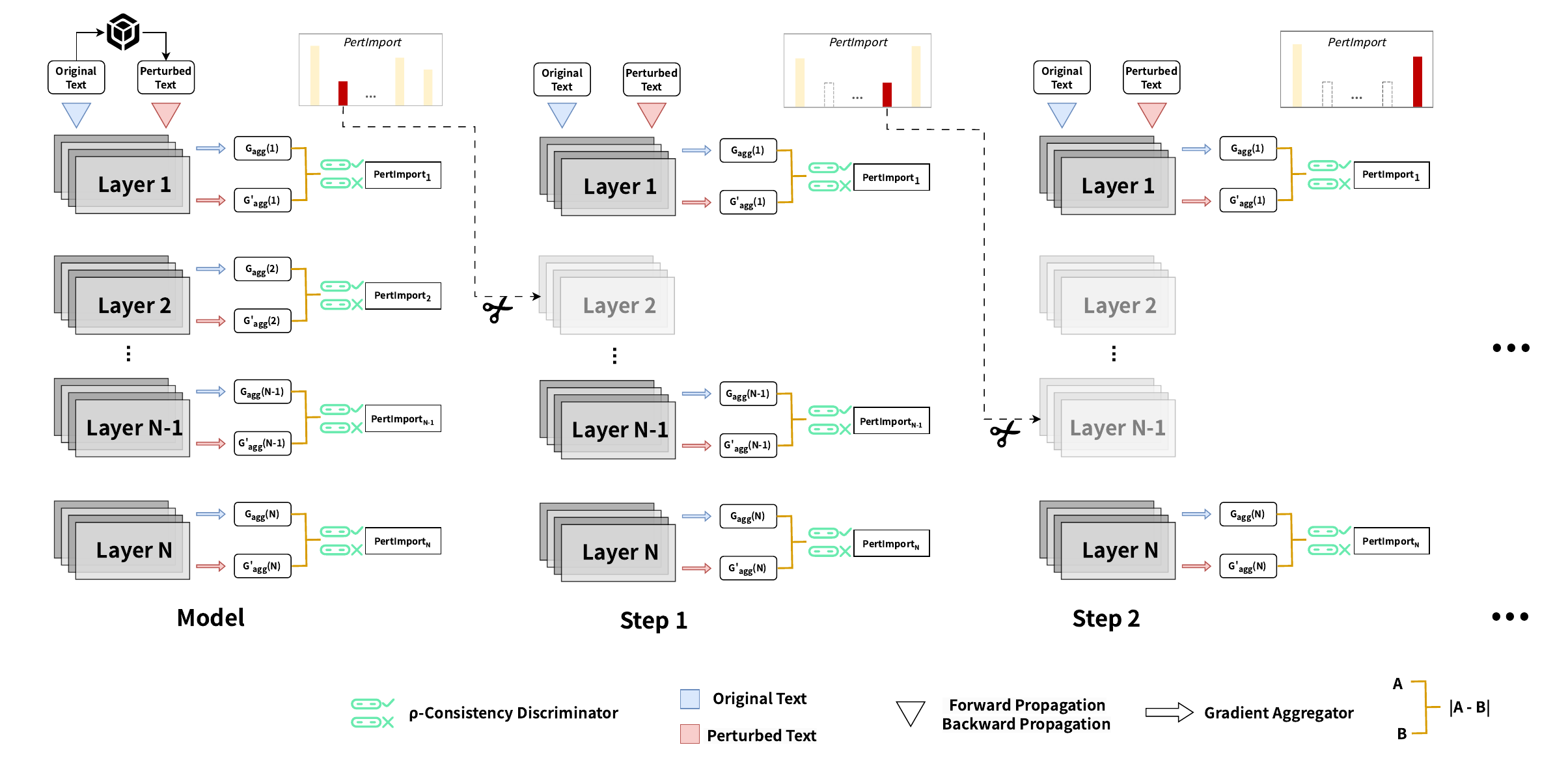}
\caption{Overview of our PIP method, where an auxiliary LLM generates perturbed text (see Section \ref{subsection:perturbation}). During pruning, the same original and perturbed texts are used to compute PertImport values, determining layer significance. The least significant layers (e.g., the 2nd, ($N-1$)-th, and $N$-th layers in the first three steps) are iteratively removed.}
\label{fig:main}
\end{figure*}

\subsubsection{Character Insertion}
\begin{itemize}
    \item \textbf{Example:} Inserting the character ``g'' to \textit{``host''} yields \textit{``ghost''}. Then, the sentence \textit{``The conference host welcomed attendees''} becomes \textit{``The conference ghost welcomed attendees''}.
    \item \textbf{Impact:} In automated summarization, this perturbation can generate fictional narratives, misrepresenting factual events. In enterprise search, it may retrieve irrelevant documents, introducing noise into enterprise knowledge graphs and decision-making pipelines.
\end{itemize}

\subsection{PertImport: A Perturbation-based Metric for Layer Importance Assessment}\label{subsection:pertimport}

Building on the text perturbation framework defined in Section~\ref{subsection:perturbation}, we propose PertImport, a novel metric to measure the sensitivity to meaning-altering inputs. The rationale for this metric is grounded in the following analysis:

Consider a pre-trained large language model $\mathcal M$ with $N$ layers. Each layer $i$ has parameters \( \mathbf{w}_i \in \mathbb{R}^n \). Excluding embedding and head layers, $\mathcal M$ can be seen as a mapping function $f$. For any input $s$, the function $f(s; \mathbf{w}_1, \mathbf{w}_2, \ldots, \mathbf{w}_{N})$ generates an output that is consistent with $s$'s semantics.

In Supervised Fine-Tuning (SFT) \cite{brown2020gpt3}, when sample \( s \) is used as both input and label for model \( \mathcal{M} \), the parameter update rule is: 
\begin{align}\label{eq:weight_update}
\mathbf{w}_i' = \mathbf{w}_i - \alpha \cdot \nabla_{\mathbf{w}_i} \mathcal{L} (s; \mathbf{w}_{1:N})
.\end{align}
Introducing perturbation $\delta s$ (Section \ref{subsection:perturbation}) to $s$ yields the perturbed sample $s + \delta s$ for SFT:
\begin{align}\label{eq:weight_update_ps}
\mathbf{w}_i'' = \mathbf{w}_i - \alpha \cdot \nabla_{\mathbf{w}_i} \mathcal{L} (s + \delta s; \mathbf{w}_{1:N}).
\end{align}
Here, \( \mathbf{w}_i \in \mathbb{R}^n \) denotes the original parameters of the \( i \)-th layer, \( \mathbf{w}_i^{'} \in \mathbb{R}^n \) and \( \mathbf{w}_i^{''} \in \mathbb{R}^n \) denote the updated parameters of the \( i \)-th layer. The notation $\mathbf{w}_{1:N}$ represents the collection of parameters from the first layer to the $N$-th layer. The learning rate is denoted by \( \alpha \) , and the loss function \( \mathcal{L}(\cdot) \) quantifies the difference between predictions and labels.

Given that $f$ is differentiable and $\delta s \to \mathbf{0}$, we can use the first-order Taylor expansion to approximate the change in the function value. Specifically, we obtain the following approximation:
\begin{align}\label{eq:taylor}
& f(s + \delta s; \mathbf{w}_{1:N}^{''})  - f(s; \mathbf{w}_{1:N}^{'}) \nonumber \\
 & \approx \nabla_{s} f \cdot \delta s + \sum_{i,j} (\nabla_{\mathbf{w}^{'}_i} f)_{j} \cdot (\mathbf{w}_i^{''} - \mathbf{w}_i^{'})_{j}
,\end{align}
where $(\nabla_{\mathbf{w}^{'}_i} f)_{j} \in \mathbb{R}$ represents the $j$-th element of the gradient vector $\nabla_{\mathbf{w}^{'}_i} f$ at the $i$-th layer, and $(\mathbf{w}_i^{''} - \mathbf{w}_i^{'})_{j} \in \mathbb{R}$ represents the $j$-th element of the parameter difference $\mathbf{w}_i^{''} - \mathbf{w}_i^{'}$ at the $i$-th layer. 

Subsequently, we utilize Equation (\ref{eq:taylor}) to establish an upper bound for the estimation of the difference in output values with and without the perturbation $\delta s$. We introduce a constant sequence $\left \{ C_i \right \} _{i=1}^{N} $ defined as $C_i=\underset{1 \le j \le n}{max} \left | (\nabla_{\mathbf{w}^{'}_i} f)_{j} \right | $. 
Incorporating Equations (\ref{eq:weight_update}), (\ref{eq:weight_update_ps}), and (\ref{eq:taylor}), we arrive at:
\begin{align}\label{eq:output_estimate}
& \left | f(s + \delta s; \mathbf{w}_{1:N}^{''})  - f(s; \mathbf{w}_{1:N}^{'}) \nonumber \right | \nonumber \\
& \le \left | \nabla_{s} f \cdot \delta s \right | + \nonumber \\
&  C \sum_{i,j} \left | (\nabla_{\mathbf{w}_i} (\mathcal{L}(s + \delta s;\mathcal W) - \mathcal{L}(s;\mathcal W)))_j \right | 
,\end{align}
where $C = \underset{1 \le i \le N}{max}\left \{ C_i \right \}\in \mathbb{R}$ is the maximum value, and $\mathcal W$ represents all the parameters of $\mathcal M$.

\begin{theorem} \label{thm}
To enhance the robustness of the pruned model (defined as its capability to distinguish between $s$ and $s+\delta s$), it's best to select parameters with smaller gradient differences between the perturbed and unperturbed views.
\end{theorem}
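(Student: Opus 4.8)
The plan is to identify the pruned model's robustness with the magnitude of the output gap $R := \left| f(s+\delta s; \mathbf{w}_{1:N}'') - f(s; \mathbf{w}_{1:N}') \right|$: a larger $R$ means the network responds differently to $s$ and its meaning-distorted variant $s+\delta s$, which is precisely the advertised capacity to distinguish the two views. Since pruning deletes layers, the operative question becomes which layers can be dropped while sacrificing the least of $R$. I would answer it by reading each layer's contribution to $R$ off the upper bound in Equation~(\ref{eq:output_estimate}) and showing that this contribution is controlled entirely by the per-layer gradient difference.

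First I would decompose the right-hand side of Equation~(\ref{eq:output_estimate}). The term $|\nabla_s f \cdot \delta s|$ depends only on the input perturbation and on no $\mathbf{w}_i$, so it is a layer-independent baseline that no choice of pruned layer alters at first order. The remaining term is a sum over layers, and substituting the update rules (\ref{eq:weight_update}) and (\ref{eq:weight_update_ps}) gives $(\mathbf{w}_i'' - \mathbf{w}_i')_j = -\alpha\,(\nabla_{\mathbf{w}_i}\mathcal{L}(s+\delta s;\mathcal{W}) - \nabla_{\mathbf{w}_i}\mathcal{L}(s;\mathcal{W}))_j$. Hence layer $i$ enters $R$ only through $C_i\,\alpha \sum_j \left| (\nabla_{\mathbf{w}_i}\mathcal{L}(s+\delta s;\mathcal{W}) - \nabla_{\mathbf{w}_i}\mathcal{L}(s;\mathcal{W}))_j \right|$, i.e. the scaled $\ell_1$ norm of its gradient difference between the perturbed and unperturbed views. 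This isolates the gradient difference as the sole per-layer handle on robustness.

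With the decomposition in hand the selection criterion follows: pruning layer $i$ removes exactly its contribution to $R$, so to keep the pruned model's robustness as high as possible one should discard the layers whose contribution is smallest — equivalently, the layers with the smallest gradient differences. I would phrase this as a greedy, worst-case statement: among all single-layer removals, the one minimizing the lost bound on $R$ is the layer with minimal $\sum_j |(\nabla_{\mathbf{w}_i}(\mathcal{L}(s+\delta s;\mathcal{W}) - \mathcal{L}(s;\mathcal{W})))_j|$; because the learning rate $\alpha$ and the uniform constant $C$ are common positive factors, the comparison reduces cleanly to ranking gradient-difference norms, and iterating it on the recomputed architecture produces PIP's ordering.

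The main obstacle I anticipate is the gap between bounding and realizing $R$: Equation~(\ref{eq:output_estimate}) is only an upper bound, so a small per-layer term certifies that a layer \emph{cannot} contribute much to distinguishing the views, not that every small-gradient layer is genuinely inert. I would address this by treating the gradient difference as a conservative surrogate importance score — removing the layer with the smallest bound guarantees the smallest \emph{guaranteed} loss of discriminative power — and by absorbing each $C_i$ into the uniform $C$ so that the ranking depends on nothing but the gradient-difference norms. The differentiability of $f$ together with $\delta s \to \mathbf{0}$, assumed before Equation~(\ref{eq:taylor}), is what keeps the first-order approximation valid and thereby licenses reading the layerwise contributions off the linearized expansion.
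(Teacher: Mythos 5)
Your proposal is correct in substance and rests on the same decomposition the paper uses: both arguments read each parameter's contribution to the output gap off the bound in Equation (\ref{eq:output_estimate}), note that the input term $\left|\nabla_s f\cdot\delta s\right|$ is independent of which weights are pruned, and conclude that removing the entry with the smallest gradient-difference norm sacrifices the least discriminative power. Where you genuinely diverge is in how the gap between the upper bound and the actual value of $R$ is closed --- which is the only nontrivial step. The paper assumes the output difference $\mathbf{Y}$ is uniformly distributed (invoking maximum entropy), so that its expectation equals half of the bound; this converts Equation (\ref{eq:output_estimate}) into the exact identities (\ref{eq:E_X_i0_j0}) and (\ref{eq:E_X_i00_j00}) for $\mathbb{E}[\mathbf{Y}]$ after pruning, and the monotonicity of the surviving sum then gives the comparison of expectations, packaged as a proof by contradiction. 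You instead stay with the bound itself and explicitly concede that a small per-layer term only certifies that a layer \emph{cannot} contribute much, treating the gradient difference as a conservative surrogate score. Your version is more candid about the logical status of the claim --- the uniform-distribution assumption is doing precisely the work you flag as the main obstacle, and it is a modeling postulate rather than a derivation --- while the paper's version buys an equality (and hence a clean ordering of expectations rather than of bounds) at the price of that postulate. Two further points of contrast: your explicit substitution of the update rules (\ref{eq:weight_update}) and (\ref{eq:weight_update_ps}), exposing the common factor $\alpha$, makes transparent a step the paper performs silently between Equations (\ref{eq:taylor}) and (\ref{eq:output_estimate}); and you argue at the level of whole layers (summing over $j$), which matches the PertImport metric and Algorithm \ref{alg:PIP} more closely than the paper's parameter-level indices $(i_0,j_0)$. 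Neither argument is a rigorous proof of the informal statement, but yours reaches the same ranking criterion by an equivalent and arguably cleaner route.
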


\begin{proof}
Let $\mathbf{Y}$ be a random variable representing the output difference with and without the perturbation $\delta s$. Consider removing the parameter at the $i_0$-th layer and the $j_0$-th position, i.e., $(\mathbf{w}_{i_0})_{j_0}$. Suppose there exists another parameter $(\mathbf{w}_{i_0^{'}})_{j_0^{'}}$ that has a smaller gradient difference and a higher average probability of detecting the difference between $s$ and $s+\delta s$. Assuming, without loss of generality, that $\mathbf{Y}$ follows a uniform distribution (as an analytical tool based
on the principle of maximum entropy), we can derive the expectation of $\mathbf{Y}_{\overline{{i_0}, {j_0}}}$ after pruning the parameter $(\mathbf{w}_{i_0})_{j_0}$:
\begin{align}\label{eq:E_X_i0_j0}
& \sum_{(i, j)\ne (i_0,j_0)}^{} \left | (\nabla_{\mathbf{w}_i} (\mathcal{L}(s + \delta s;\mathcal W) - \mathcal{L}(s;\mathcal W)))_j \right | \nonumber \\
& = \frac{2}{C} \mathbb{E}[\mathbf{Y}_{\overline{{i_0}, {j_0}}}] - \frac{1}{C} \left | \nabla_{s} f \cdot \delta s \right |
.
\end{align}
Similarly, when $(\mathbf{w}_{i_0^{'}})_{j_0^{'}}$ is pruned, we can derive the equation for the expectation of $\mathbf{Y}_{\overline{{i_0^{'}}, {j_0^{'}}}}$:
\begin{align}\label{eq:E_X_i00_j00}
& \sum_{(i, j)\ne (i_0^{'},j_0^{'})}^{} \left | (\nabla_{\mathbf{w}_i} (\mathcal{L}(s + \delta s;\mathcal W) - \mathcal{L}(s;\mathcal W)))_j \right | \nonumber \\
& = \frac{2}{C} \mathbb{E}[\mathbf{Y}_{\overline{{i_0^{'}}, {j_0^{'}}}}] - \frac{1}{C} \left | \nabla_{s} f \cdot \delta s \right |
.\end{align}
By Equations (\ref{eq:E_X_i0_j0}) and (\ref{eq:E_X_i00_j00}), we have $\mathbb{E}[\mathbf{Y}_{\overline{{i_0}, {j_0}}}]>\mathbb{E}[\mathbf{Y}_{\overline{{i_0^{'}}, {j_0^{'}}}}]$. This contradicts the hypothesis that removing $(\mathbf{w}_{i_0^{'}})_{j_0^{'}}$ increases the likelihood of detecting the perturbation. Consequently, the assumption is invalid, implying that the theorem holds.
\end{proof}

Based on Theorem \ref{thm}, we propose a robustness-aware importance metric, PertImport. For the $i$-th layer, PertImport quantifies its discriminative sensitivity through the following value:
\begin{align}
 \frac{1}{|\mathcal D|} \sum_{s\in \mathcal D}^{} & |g(\nabla_{\mathbf{w}_i} \mathcal{L}(s + \delta s;\mathcal W)_{1:n}) \nonumber \\ 
- & g(\nabla_{\mathbf{w}_i} \mathcal{L}(s;\mathcal W)_{1:n}) |
,\end{align}
where $\mathcal D$ is a small calibration dataset, and $n$ is the count of parameters in the $i$-th layer of $\mathcal M$. The function $ g: \mathbb{R}^n \to \mathbb{R}$ aggregates gradient information for a specific layer using norms like $L_1$, $L_2$, or $L_\infty$, as shown in \cite{han2015deepcompression}. See Appendix \ref{appendix:norm} for the definitions of these norms.

\subsection{PIP: Perturbation-based Iterative Pruning}

After assessing layer importance via perturbation, we avoid making premature pruning decisions. Instead, PIP uses a more systematic approach by employing an iterative greedy strategy to progressively prune layers with minimal performance impact. 

To enhance the robustness and accuracy of the importance evaluation, we introduce a consistency discriminator that filters out layers with unstable gradient differences. Specifically, it computes the standard deviation across multiple perturbations and excludes layers with high variability.

In summary, by integrating information from both the unperturbed and perturbed views, PIP effectively reduces the stochasticity inherent in pruning and facilitates a more stable and efficient optimization. The details are presented in Algorithm \ref{alg:PIP}, and the overall workflow is illustrated in Figure \ref{fig:main}.

\begin{algorithm}[tb]
    \caption{Detailed Implementation of PIP}
    \label{alg:PIP}
    \begin{algorithmic}[0]
        \setlength{\baselineskip}{1.30em}

        \REQUIRE \qquad \\ pre-trained LLM $\mathcal M$, \\
        calibration dataset $\mathcal D$, \\
        text perturbation method $\delta(\cdot)$, \\
        gradient aggregation function $g(\cdot)$, \\
        \# of layers to be pruned $L \in \mathbb{N} ^{*}$, \\
        consistency threshold $\rho \in \mathbb{R}^{+}$
        \ENSURE \qquad \\ indices of the pruned layers $\mathcal P$, \\
        pruned model $\mathcal M^{*}$

        \STATE \hspace{-4mm} $\mathcal D^{'}\gets \delta(\mathcal D);$

        \STATE \hspace{-4mm} $\mathcal M^{*}\gets \mathcal M;$
        
        \STATE \hspace{-4mm} $N_{\mathcal M}\gets |\mathcal M.layers|;$

        \STATE \hspace{-4mm} $\mathcal P\gets \emptyset;$

        \STATE \hspace{-4mm} \textbf{for } $\ell = 1$ to $L$ \textbf{do}
        \STATE  
        $ \left \{ \boldsymbol{G}_{agg}(i) \right \} _{i=1}^{N_{\mathcal M}-\ell+1} \gets(g\circ SFT)(\mathcal M^{*}, \mathcal D);$ 
        
        \STATE 
        $\left \{ \boldsymbol{G'}_{agg}(i) \right \} _{i=1}^{N_{\mathcal M}-\ell+1} \gets(g\circ SFT)(\mathcal M^{*}, \mathcal D^{'});$ 

        \STATE
        \textbf{for} $i = 1$ to $N_{\mathcal M}-\ell+1$ \textbf{do}

        \STATE
        \hspace{4mm} \textbf{if} $std(\boldsymbol{G}_{agg}(i)-\boldsymbol{G'}_{agg}(i))<\rho$ \textbf{then} 

        \STATE
        \hspace{8mm} $ \boldsymbol {PI}(i)  \gets |\boldsymbol{G}_{agg}(i)-\boldsymbol{G'}_{agg}(i)|;$

        \STATE
        \hspace{4mm} \textbf{else}

        \STATE
        \hspace{8mm} $ \boldsymbol {PI}(i)  \gets +  \infty ;$

        \STATE
        \hspace{4mm} \textbf{end if}

        \STATE
        \textbf{end for}

        \STATE 
        $p_{\ell}\gets {argmin}_{1\le i\le N_{\mathcal M}-\ell+1}^{}\boldsymbol {PI}(i);$
        
        \STATE 
        $\mathcal P\gets \mathcal P \cup \left \{ p_{\ell} \right \};$ 
        
        \STATE 
        $\mathcal M^{*} \gets \mathcal M^{*} \setminus \left \{ layer_{p_{\ell}} \right \};$ 
        \STATE \hspace{-4mm} \textbf{end for}
        \STATE \hspace{-4mm} \textbf{return} $\mathcal P$, $\mathcal M^{*}$
    \end{algorithmic}
\end{algorithm}

\section{Experiments}

\subsection{Experimental Setup}

\subsubsection{Model Selection}
To compare with existing methods, we conduct experiments on LLaMA2~\cite{touvron2023llama} and LLaMA3~\cite{dubey2024llama} models of varying sizes. The architectural similarity to other LLMs allows our method, PIP, to generalize effectively to other models. Experiments on additional model architectures are provided in Appendix \ref{appendix:other_llm}.
\subsubsection{Evaluation and Datasets}
We evaluate accuracy using the following datasets: BoolQ \cite{clark2019boolq}, PIQA \cite{bisk2020piqa},  HellaSwag \cite{zellers2019hellaswag}, WinoGrande \cite{sakaguchi2021winogrande}, ARC-Easy \cite{clark2018arc}, 
ARC-Challenge \cite{clark2018arc}, and OpenBookQA \cite{OpenBookQA2018}, all of which have been widely utilized in previous structured pruning studies. To ensure a fair comparison across the aforementioned datasets, we use the LM Evaluation Harness framework \cite{eval-harness} with its default settings for evaluation, without incorporating any shots as demonstrations. In addition, to assess the capability of predicting the next token, we evaluate perplexity (PPL) on the PTB dataset \cite{marcus1993ptb}, which is an established metric for evaluating the predictive capabilities of LLMs. 
\subsubsection{Baseline Methods}
To show the effectiveness of our PIP method, we compare it with several state-of-the-art structured pruning methods specifically designed for LLMs: 
\begin{itemize}
\item LLM-Pruner \cite{ma2023llm}: A method which uses Taylor-based metrics to prune less important heads in MHA and neurons in FFN.

\item SliceGPT \cite{ashkboos2024slicegpt}: A method which applies orthogonal transformations. By doing this, it can prune both rows and columns of the weight matrices, which in turn helps to reduce the hidden size within the LLM.

\item ShortGPT \cite{men2024shortgpt}: A method which identifies redundant layers that have a small similarity between the inputs and outputs, pruning those to reduce the depth.
\end{itemize}

To compare with baselines, we follow the same experimental settings suggested in their studies.

\subsubsection{Experimental Details}
Following \cite{ashkboos2024slicegpt}, we randomly select a few samples (fewer than 10) from the WikiText2 dataset \cite{merity2016wikitext2} for calibration, ensuring reproducibility with a fixed random seed. We aggregate gradients using the $L_2$-norm. Experiments are conducted using the Transformers library \cite{transformers} on a server with 8 NVIDIA A100 GPUs (80GB VRAM each, totaling 640GB).

\subsubsection{Statistics of Pruned Models}
Table~\ref{tab:pruned-stats} summarises the key characteristics of the pruned models used in our primary experiments, including parameter count, memory footprint, and Time-Per-Output-Token (TPOT). Evaluations use a randomly sampled sequence from WikiText2 with a fixed output length of 128 tokens. For hardware configuration: LLaMA2-8B and LLaMA3-13B are tested on a single NVIDIA A100 GPU, while LLaMA2-70B and LLaMA3-70B employ tensor parallelism across 4 NVIDIA A100 GPUs. All experiments are executed in half-precision mode.
\begin{table}[tb]
    \centering
    \setlength{\tabcolsep}{0.4mm}{
    \begin{tabular}{c|c|rrrrr}
    \toprule 
    \toprule 
    \textbf{Model} & \textbf{Ratio} & \textbf{\#Params} & \textbf{Memory} & \textbf{TPOT} \\ 
    \midrule
    \multirow{2}{*}{LLaMA3-8B}  & \textit{Dense} & 8.0B & 15.0GiB & 46.7ms \\
                                & 19.0\% & 6.5B & 12.1GiB & 41.5ms \\ 
    \midrule
    \multirow{2}{*}{LLaMA3-70B} & \textit{Dense} & 70.6B & 131.4GiB & 266.2ms \\
                                & 19.4\% & 56.9B & 99.5GiB & 223.5ms \\ 
    \midrule
    \multirow{2}{*}{LLaMA2-13B} & \textit{Dense} & 13.0B & 24.4GiB & 73.4ms \\
                                & 19.5\% & 10.5B & 19.6GiB & 58.9ms \\ 
    \midrule
    \multirow{2}{*}{LLaMA2-70B} & \textit{Dense} & 69.0B & 128.5GiB & 269.7ms \\
                                & 19.9\% & 55.3B & 96.6GiB & 217.8ms \\ 
    \bottomrule
    \bottomrule
    \end{tabular}
    }
    \caption{Statistics of base and pruned models. \textit{``Dense''} denotes the base model. ``Ratio'' is the pruning ratio, calculated as $(\text{\#Pruned Params})/({\text{\#Base Params}})$.}
    \label{tab:pruned-stats}
\end{table}

\subsection{Zero-shot Performance}

\begin{table*}[tb]
    \setlength{\tabcolsep}{0.4mm}{
    \begin{tabular}{ll|r|rrrrrrr|r}
    \toprule \toprule 
    \multicolumn{1}{c}{\textbf{Model}}       & \textbf{Method}      & \textbf{PPL↓} & \textbf{BoolQ↑} & \textbf{PIQA↑} & \textbf{HeSwg↑} & \textbf{WGrd↑} & \textbf{ARC-E↑} & \textbf{ARC-C↑} & \textbf{OBQA↑} & \textbf{Avg.↑} \\ 
    \midrule
    LLaMA3-8B   & \textit{Dense}        & 10.6 & 81.4 & 79.7 & 60.2 & 72.5 & 80.1 & 50.5 & 34.8 & 65.6 \\ \cline{2-11}
                & LLM-Pruner   & \underline{56.5} & 63.5 & \underline{69.5} & 42.6 & 62.4 & 52.3 & 29.5 & \textbf{27.0} & 49.5 \\
                & SliceGPT     & 72.3 & 40.8 & 65.4 & 39.8 & 63.2 & \textbf{59.5} & 29.4 & 23.8 & 46.0 \\
                & ShortGPT     & 67.9 & \underline{65.2} & 68.9 & \textbf{45.6} & \textbf{69.4} & 57.2 & \textbf{36.5} & 25.4 & \underline{52.6} \\
                & PIP (Ours)  & \textbf{56.3} & \textbf{70.9} & \textbf{69.6} & \underline{44.7} & \textbf{69.4} & \underline{57.9} & \underline{35.1} & \underline{26.8} & \textbf{53.5} \\ 
    \midrule
    LLaMA3-70B  & \textit{Dense}        & 8.2 & 85.2 & 82.4 & 66.4 & 80.3 & 86.9 & 60.3 & 38.2 & 71.4 \\ \cline{2-11}
                & LLM-Pruner   &-&-&-&-&-&-&-&-&-\\
                & SliceGPT     & 78.2 & 59.4 & 69.6 & 44.4 & \underline{72.1} & 69.7 & 41.1 & \textbf{30.2} & 55.2 \\
                & ShortGPT     & \underline{13.9} & \textbf{80.9} & \underline{76.0} & \textbf{57.1} & 60.5 & \textbf{77.4} & \textbf{47.9} & 19.0 & \underline{59.8} \\
                & PIP (Ours)  & \textbf{12.5} & \underline{79.0} & \textbf{78.6} & \underline{56.1} & \textbf{73.3} & \underline{75.7} & \underline{45.3} & \underline{29.6} & \textbf{62.5} \\ 
    \midrule
    LLaMA2-13B  & \textit{Dense}       & 28.9 & 80.6 & 79.1 & 60.0 & 72.4 & 79.4 & 48.5 & 35.2 & 65.0 \\ \cline{2-11}
                & LLM-Pruner  & 150.2 & \underline{57.7} & 60.3 & 31.9 & 53.9 & 37.4 & 22.9 & 15.8 & 40.1 \\
                & SliceGPT    & 64.3 & 38.2 & \underline{65.0} & \underline{39.5} & \textbf{65.5} & \textbf{61.3} & \underline{33.4} & \textbf{28.0} & \underline{47.3} \\
                & ShortGPT    & \underline{44.6} & 49.8 & 55.5 & 39.3 & 57.1 & 49.3 & 29.9 & \underline{25.4} & 43.8 \\
                & PIP (Ours) & \textbf{41.8} & \textbf{63.3} & \textbf{74.5} & \textbf{50.5} & \underline{62.0} & \underline{58.8} & \textbf{37.4} & 25.0 & \textbf{53.1} \\ 
    \midrule
    LLaMA2-70B  & \textit{Dense}        & 14.4 & 76.6 & 81.1 & 64.0 & 77.0 & 77.8 & 51.2 & 34.8 & 66.1 \\ \cline{2-11}
                & LLM-Pruner   &-&-&-&-&-&-&-&-&- \\
                & SliceGPT     & 33.9 & 70.1 & \underline{76.3} & 52.7 & \textbf{76.6} & \textbf{76.4} & \textbf{47.0} & \textbf{32.6} & \underline{61.7} \\
                & ShortGPT     & \underline{18.5}& \underline{73.5} & 73.9 & \underline{56.0} & 72.5 & 66.7 & 39.2 & 26.8 & 58.4 \\
                & PIP (Ours)  & \textbf{17.2} & \textbf{80.7} & \textbf{77.0} & \textbf{57.8} & \underline{73.4} & \underline{70.7} & \underline{43.6} & \underline{29.4} & \textbf{61.8} \\ 
    \bottomrule \bottomrule
    \end{tabular}
    }
    \caption{Zero-shot performance of LLMs with approximately 20\% pruning ratio. ``\textit{Dense}'' is the original unpruned model. ``↑'' means higher is better and ``↓'' means lower is better. Bold values denote the best performance among pruned models, underlined values the second-best. Abbreviations: HellaSwag (HeSwg), WinoGrande (WGrd), ARC-Easy (ARC-E), ARC-Challenge (ARC-C), OpenBookQA (OBQA). ``Avg.'' represents the average score across the seven benchmarks. ``-'' indicates incompatibility between pruning methods and specific LLMs.}
    \label{tab:zero-shot}
\end{table*}

We compare PIP with baselines on zero-shot performance (see Table \ref{tab:zero-shot}). On average, PIP retains over 85\% of the base model's accuracy across benchmarks, with an approximate pruning ratio of 20\%. In some cases, its performance is within 5\% of the base model's, showing its ability to preserve the crucial aspects. PIP also consistently outperforms baselines, making it a superior LLM pruning technique. For more details, see Appendix \ref{appendix:main}.

\subsection{Ablation Analysis}

\begin{table}[tb]
\centering
\setlength{\tabcolsep}{1.7mm}{
\begin{tabular}{c|cc|r}
\toprule \toprule 
\textbf{Ratio} & \textbf{Perturbation} & \textbf{Iteration}   & \textbf{PPL↓}   \\ 
\midrule
9.8\%  & \usym{2714}       & \usym{2714}  &  30.7    \\
 &\usym{2714}       & \usym{2718}   &   31.3   \\
 &\usym{2718}       & \usym{2714}   &   43.4   \\
\midrule
19.5\% & \usym{2714}       & \usym{2714}   &  41.8   \\
 &\usym{2714}       & \usym{2718}   &  42.0   \\
 &\usym{2718}       & \usym{2714}   &  81.9  \\
\midrule
29.2\% & \usym{2714}       & \usym{2714}  &  53.3  \\
& \usym{2714}       & \usym{2718}   &  67.4  \\
& \usym{2718}       & \usym{2714}   &  99.8   \\
\bottomrule \bottomrule
\end{tabular}
}
\caption{Ablation studies of LLaMA2-13B. ``Ratio'': Pruning Ratio; ``\protect\usym{2714}'': Enabled; ``\protect\usym{2718}'': Disabled.}
\label{tab:ablation}
\end{table}

To investigate the critical components of PIP, we conduct ablation studies on LLaMA2-13B, focusing on perturbation and greedy-search iteration. As shown in Table \ref{tab:ablation}, we evaluate three pruning ratios (9.8\%, 19.5\%, 29.2\%) under three configurations: (1) Full PIP implementation, (2) Perturbation-only, and (3) Iteration-only. The performance is measured using the PPL metric on the PTB dataset.

Our results show that perturbation and iteration work synergistically. At a 29.2\% pruning ratio, the combined approach achieves optimal performance, while disabling iteration or perturbation degrades performance, with perturbation's absence causing more severe deterioration. In addition, similar trends are observed at lower pruning ratios.

\subsection{More Analysis}
\subsubsection{Effect of Gradient Aggregation}

\begin{figure}[tb]
    \centering
    \begin{subfigure}[t]{0.23\textwidth}
        \centering
        \includegraphics[width=\linewidth]{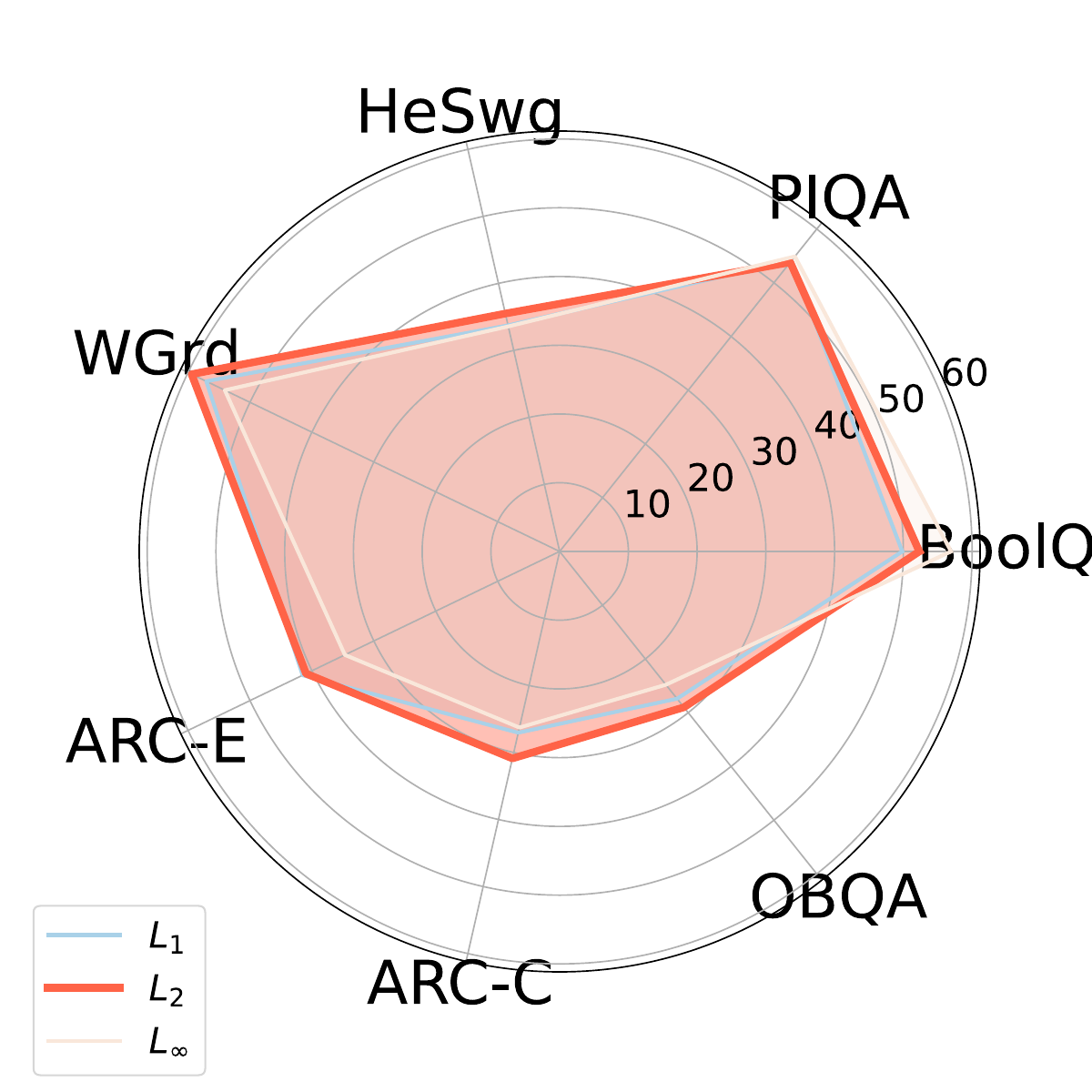}
        \caption{LLaMA3}
    \end{subfigure}
    \hfill
    \begin{subfigure}[t]{0.23\textwidth}
        \centering
        \includegraphics[width=\linewidth]{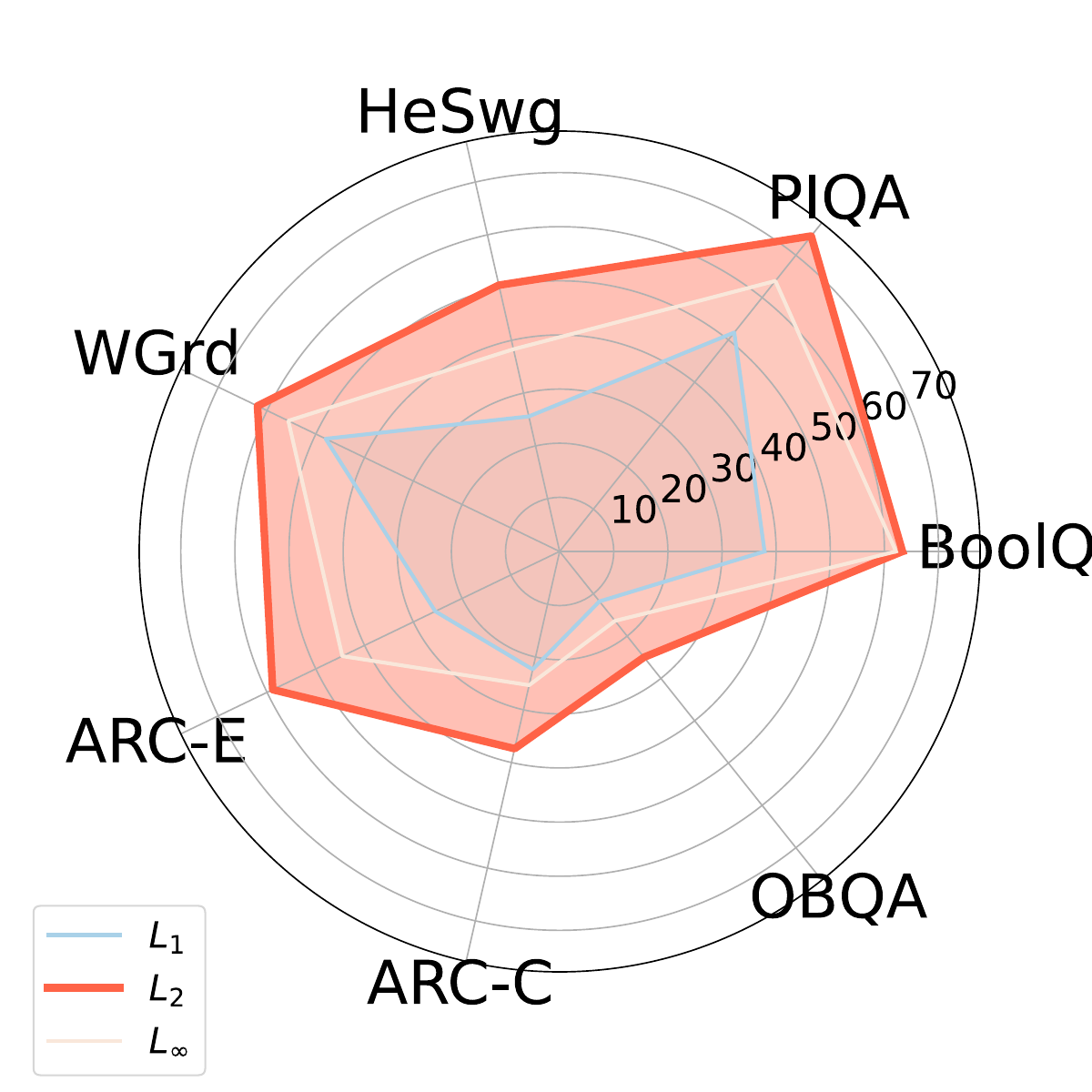}
        \caption{LLaMA2}
    \end{subfigure}
    \caption{Zero-shot performance of the pruned LLM using PIP, across various gradient aggregation strategies.}
    \label{aggregation_strategy}
\end{figure}

In this section, we explore different gradient aggregation strategies for model pruning, as shown in Figure \ref{aggregation_strategy}. Overall, the $L_2$-norm outperforms the $L_1$ and $L_{\infty }$ norms by assigning more weight to larger gradients and mitigating extreme values. Therefore, we recommend the $L_2$-norm as the default for general use. For more details, see Appendix \ref{appendix:grad_aggr}.

\subsubsection{Effect of Calibration Data Volume}
\begin{figure}[tb]
    \centering
    \begin{subfigure}[t]{0.23\textwidth}
        \centering
        \includegraphics[width=\linewidth]{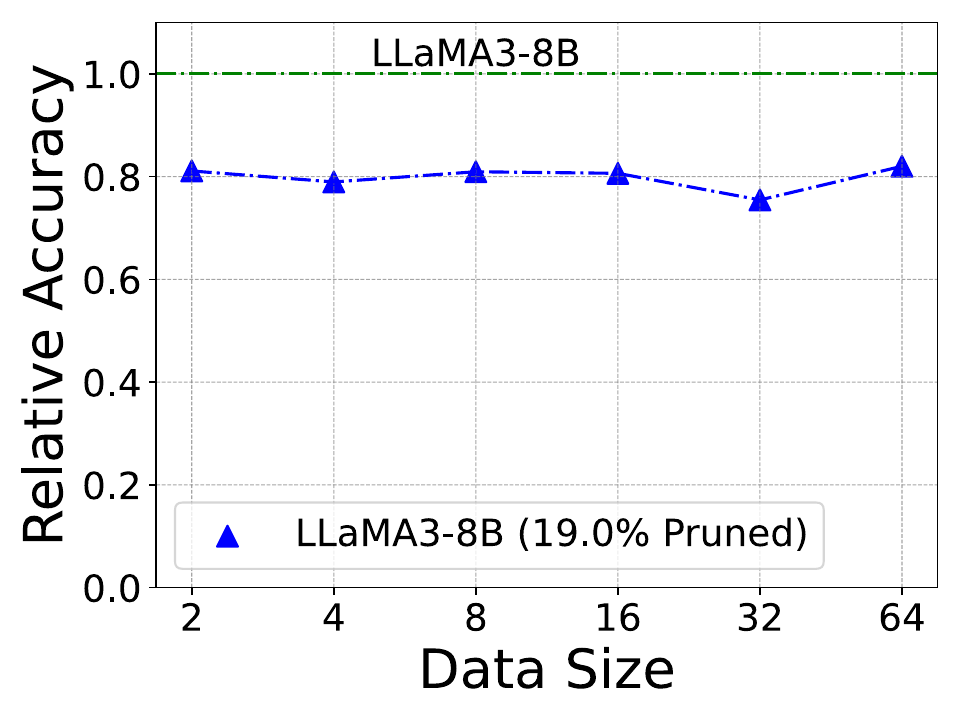}
    \end{subfigure}
    \hfill
    \begin{subfigure}[t]{0.23\textwidth}
        \centering
        \includegraphics[width=\linewidth]{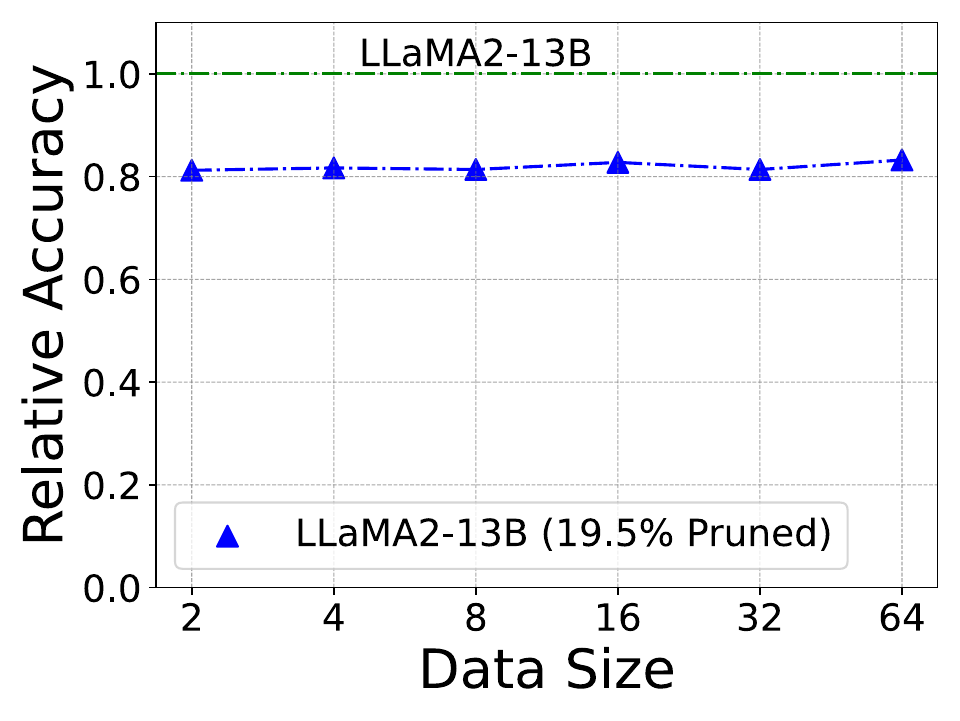}
    \end{subfigure}
    \caption{Experiments of performance at various data sizes. ``Data Size'' is the number of samples in the calibration dataset. ``Relative Accuracy'' is the ratio of the average accuracy of the pruned LLM on various benchmarks to the average accuracy of the Dense model.}
    \label{fig:data_volume}
\end{figure}
This section examines how calibration data volume affects the zero-shot performance of LLMs pruned using PIP. We use character swap and replacement techniques with the $L_2$-norm for gradient aggregation. Figure \ref{fig:data_volume} shows that PIP achieves high efficiency with minimal calibration samples, demonstrating ``few-shot'' learning capabilities. This suggests that PIP can deliver strong performance without requiring large datasets, addressing a common limitation in practical applications. For more details, readers are referred to Appendix \ref{appendix:data_volume}.

\subsubsection{Pruning Time Analysis}

\begin{figure}[tb]
    \centering
    \includegraphics[width=0.46\linewidth]{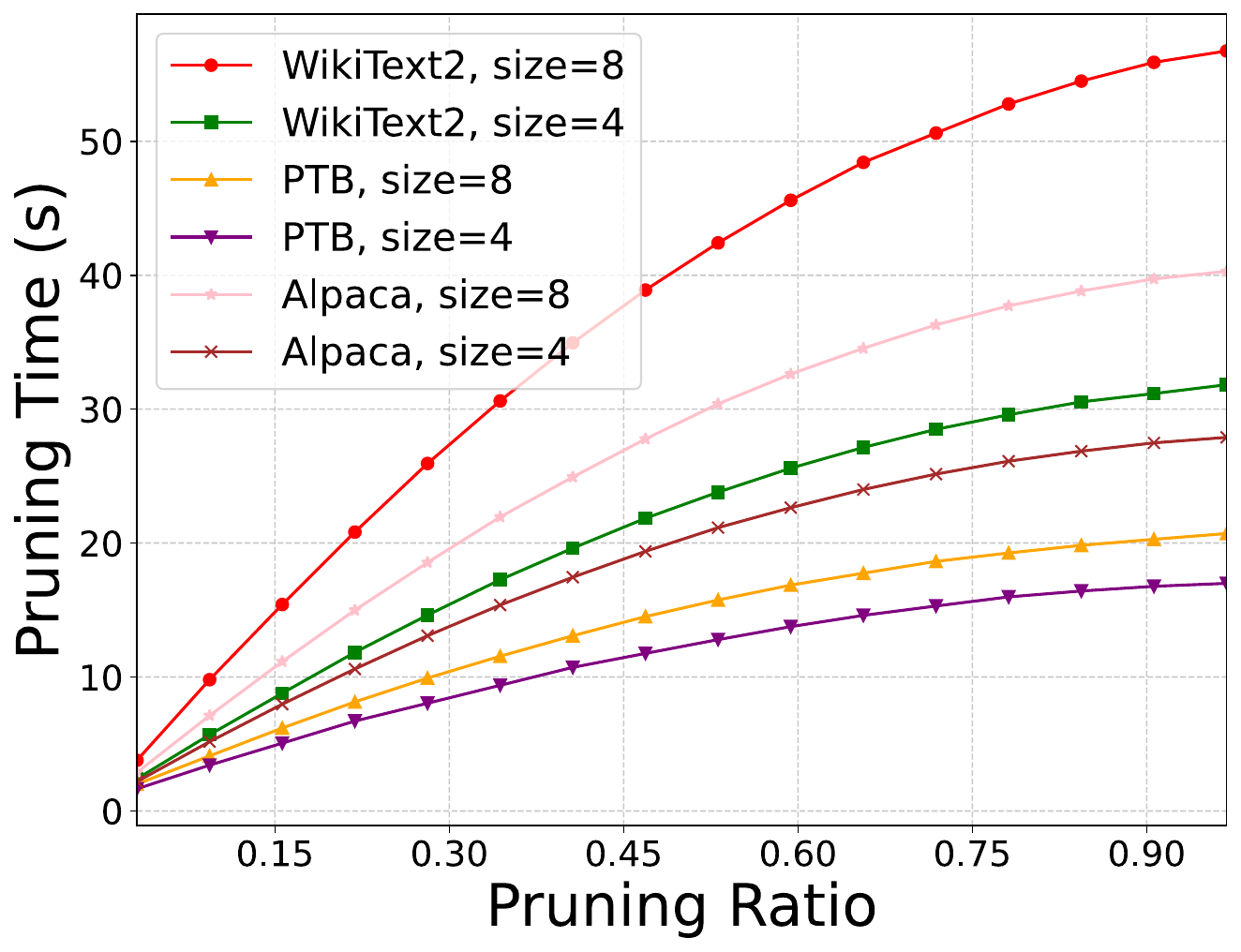}
    \caption{Impact of pruning ratio on pruning time.}
    \label{fig:pruning_time}
\end{figure}
Let the average time for forward propagation in a layer be $t_f$, and backward propagation be $t_b$. Using the PIP method, the time required to prune $m$ layers of model $\mathcal M$ is denoted as $PT_{\mathcal M} (m)$. The second finite difference of pruning time is calculated as:
\begin{align}
\triangle ^{2} PT_{\mathcal M} (m)=-(t_f+t_b),    
\end{align}
where $\triangle ^{2}$ denotes the second finite difference. This result implies constant second differences in $PT_{\mathcal M} (m)$, a hallmark of quadratic sequences. For a detailed proof, please see Appendix \ref{appendix:proof_second}. This is consistent with Figure \ref{fig:pruning_time}, which shows quadratic scaling of PIP pruning time for the LLaMA3-8B model across datasets and sample sizes.

\subsubsection{Effect of Pruning Ratio}

As shown in Table \ref{tab:ratio}, increasing the pruning ratio from 29.2\% to 39.0\% reduces memory use and latency, while causing only modest increases in perplexity and minor decreases in accuracy. Beyond 39.0\%, further pruning yields diminishing resource savings but leads to exponential increases in perplexity and substantial drops in accuracy. These results demonstrate that a pruning ratio of approximately 39.0\% achieves the optimal balance between computational efficiency and model quality.

\begin{table}[h]
    \centering
    \begin{tabular}{r|rr|r|r}
    \toprule 
    \toprule 
    \textbf{Ratio}  &\textbf{Memory} &\textbf{TPOT} &\textbf{PPL↓} &\textbf{Acc↑} \\ 
    \midrule
    29.2\% & 17.2GiB & 46.8ms & 53.3 & 48.9 \\
    \midrule
    39.0\% & 14.9GiB & 41.2ms & 95.8 & 44.7 \\
    \midrule
    48.7\% & 12.5GiB & 35.3ms & 379.0 & 38.5 \\
    \midrule
    58.5\% & 10.1GiB & 29.1ms & 1030.2 & 34.2 \\
    \midrule
    68.2\% & 6.5GiB & 20.1ms & 2549.7 & 33.1 \\
    \midrule
    78.0\% & 4.2GiB & 14.5ms & 22026.5 & 34.1 \\
    \bottomrule
    \bottomrule
    \end{tabular}
    \caption{Performance of PIP-pruned LLaMA2-13B. ``Ratio''  is the pruning ratio, and ``Acc'' is the average accuracy of the pruned model on various benchmarks.}
    \label{tab:ratio}
\end{table}

\subsubsection{Orthogonal to Quantization}

\begin{table}[h]
    \centering
    \begin{tabular}{r|rr|r|r}
    \toprule 
    \toprule 
    \textbf{Ratio}  &\textbf{Memory} &\textbf{TPOT} &\textbf{PPL↓} &\textbf{Acc↑} \\ 
    \midrule
    9.8\% & 11.4GiB & 181.8ms & 30.4 & 57.8 \\
    \midrule
    19.5\% & 10.2GiB & 160.9ms & 42.6 & 53.0 \\
    \midrule
    29.2\% & 9.0GiB & 120.8ms & 68.7 & 48.6 \\
    \midrule
    39.0\% & 7.7GiB & 102.1ms & 114.7 & 44.0 \\
    \bottomrule
    \bottomrule
    \end{tabular}
    \caption{Performance of quantized and PIP-pruned LLaMA2-13B. ``Ratio''  refers to the pruning ratio, and ``Acc'' represents the average accuracy of the quantized and pruned model on various benchmarks.}
    \label{tab:Quantization}
\end{table}

PIP is orthogonal to other compression approaches. As shown in Table \ref{tab:Quantization}, combining 8-bit quantization with PIP pruning achieves substantial memory reduction and speed-ups while maintaining acceptable perplexity and accuracy. This confirms that quantization and PIP pruning are fully orthogonal, enabling effective stacked compression.

\section{Conclusion and Future Work}

In this paper, we propose PIP, a novel perturbation-based iterative structured pruning method that unifies unperturbed and perturbed model views. It demonstrates theoretical rigor and SOTA performance across multiple benchmarks.

Future work could explore adaptive perturbation mechanisms, like dynamic scaling or task-specific perturbations, to enhance the precision of pruning. Additionally, we aim to collaborate with industry partners to deploy PIP in real-world applications, such as edge computing systems, to validate its practicality and address potential issues.

\section*{Limitations}

A key limitation of PIP is its current incompatibility with multimodal models. Tailored for text, PIP's perturbation and gradient analysis aren't easily adaptable to diverse data types like images or audio. This poses a challenge due to the unique processing needs of different data forms. Consequently, to broaden PIP's applicability, it's crucial to develop methods that handle multimodal data complexities. Overcoming this limitation will enhance PIP's utility in various AI applications.

\section*{Acknowledgements}
This work is supported by the National Natural Science Foundation of China (Grant No. 62272334, 6257073827).

\bibliography{custom}

\appendix

\begin{appendices}
\section{Definition of Norm}\label{appendix:norm}
In this section, we provide the formal definitions of the norms used in the main text, specifically the $L_1$, $L_2$, and $L_\infty$ norms. These norms are commonly used in various mathematical and computational contexts to measure the magnitude of vectors.
\subsection{$L_1$-norm}
For a vector $\mathbf{x} = (x_1, x_2, \ldots, x_n) \in \mathbb{R}^n$, the $L_1$-norm (also known as the Manhattan norm or Taxicab norm) is defined as:
\begin{align}
\|\mathbf{x}\|_1 = \sum_{i=1}^n |x_i|.
\end{align}
This norm represents the sum of the absolute values of the vector components.

\subsection{$L_2$-norm}
The $L_2$-norm (also known as the Euclidean norm) is perhaps the most commonly used norm. For a vector $\mathbf{x} = (x_1, x_2, \ldots, x_n) \in \mathbb{R}^n$, it is defined as:
\begin{align}
\|\mathbf{x}\|_2 = \sqrt{\sum_{i=1}^n x_i^2}.
\end{align}
This norm represents the Euclidean length of the vector, which is the geometric distance from the origin to the point represented by the vector in $n$-dimensional space.

\subsection{$L_\infty$-norm}
The $L_\infty$-norm (also known as the maximum norm or infinity norm) is defined as:
\begin{align}
\|\mathbf{x}\|_\infty = \max_{i} |x_i|.
\end{align}
This norm represents the maximum absolute value among the vector components. 

\section{Proof of Constant Second Differences}
\label{appendix:proof_second}
We prove that if the second finite difference of pruning time satisfies $\triangle^2 PT_{\mathcal{M}}(m) = -(t_f + t_b)$ for all $m$, then $PT_{\mathcal{M}}(m)$ must be a quadratic sequence. 

\begin{proof}
Let $\triangle^2 PT_{\mathcal{M}}(m) \equiv -(t_f + t_b)$ be constant. By definition, the second difference is the difference of consecutive first differences:  
\begin{align}
     \triangle PT_{\mathcal{M}}(m+1) - \triangle PT_{\mathcal{M}}(m) = -(t_f + t_b).
\end{align}
This implies that the first differences $\triangle PT_{\mathcal{M}}(m)$ form an arithmetic sequence with common difference $-(t_f + t_b)$. Explicitly, for the initial first difference $\triangle PT_{\mathcal{M}}(0)$, we have:
\begin{align}
    \triangle PT_{\mathcal{M}}(m) &= \triangle PT_{\mathcal{M}}(0) - (t_f + t_b)m.
\end{align}

The pruning time $PT_{\mathcal{M}}(m)$ is then the cumulative sum of these first differences:
\begin{align}
    & PT_{\mathcal{M}}(m) \nonumber 
    \\&= PT_{\mathcal{M}}(0) + \sum_{k=0}^{m-1} \triangle PT_{\mathcal{M}}(k) \nonumber \\
    &= PT_{\mathcal{M}}(0) + \sum_{k=0}^{m-1} \left[\triangle PT_{\mathcal{M}}(0) - (t_f + t_b)k \right] \nonumber \\
    &= PT_{\mathcal{M}}(0) + m \triangle PT_{\mathcal{M}}(0) \nonumber \\ 
     & \hspace{1em} - (t_f + t_b) \frac{m(m-1)}{2}.
\end{align}

Letting $a \equiv - \frac{t_f + t_b}{2}$, $b \equiv \triangle PT_{\mathcal{M}}(0) + \frac{t_f + t_b}{2}$, and $c \equiv PT_{\mathcal{M}}(0)$, this simplifies to:
\begin{align}
    PT_{\mathcal{M}}(m) &= a m^2 + b m + c,
\end{align}
which is explicitly a quadratic function of $m$.
\end{proof}

\begin{table}[tb]
\centering
\setlength{\tabcolsep}{0.1mm}{
\begin{tabular}{c|c|l}
\toprule
\toprule
\textbf{Model} & \textbf{Ratio} & \multicolumn{1}{c}{\textbf{Pruned Layers}}\\
\midrule
LLaMA3-8B & 19.0\% & 22, 18, 23, 28, 19, 21, 27 \\
\midrule
\multirow{2}{*}{LLaMA3-70B} & \multirow{2}{*}{19.4\%} & 06, 11, 46, 50, 75, 34, 49, 04, \\
 & & 20, 25, 48, 57, 60, 56, 55, 58 \\
\midrule
LLaMA2-13B & 19.5\% & 36, 31, 28, 13,
 35, 25, 38, 23 \\
\midrule
\multirow{2}{*}{LLaMA2-70B} & \multirow{2}{*}{19.9\%} & 59, 29, 67, 26, 60, 61, 50, 43, \\
 & & 58, 57, 31, 17, 56, 74, 62, 49 \\
\bottomrule
\bottomrule
\end{tabular}
}
\caption{Pruned models by PIP in our main experiments.}
\label{tab:main_prunedmodel}
\end{table}

\section{The Main Experiment}\label{appendix:main}

Our main experiments validate PIP across four model scales (8B-70B parameters), with pruned models in Table~\ref{tab:main_prunedmodel}. All configurations use fewer than 10 calibration samples from WikiText2 and employ the $L_2$-norm gradient aggregation strategy.

\section{More Analysis}
\subsection{Generalization to Other Series’ LLMs}\label{appendix:other_llm}
We aim to demonstrate the generalization capability of our PIP pruning method beyond the LLaMA models. We have conducted extensive experiments on the OPT series of models with varying scales (ranging from 1.3B to 13B parameters) to validate the effectiveness and robustness of our approach. The results are summarized in Table~\ref{tab:opt_results}.

\subsection{Effect of Gradient Aggregation}
\label{appendix:grad_aggr}
We systematically evaluate gradient aggregation norms ($L_1$, $L_2$, $L_\infty$) across model architectures under controlled pruning settings. The LLaMA3-8B experiments employ 8 calibration samples with character swap perturbation and 7-layer pruning, while LLaMA2-13B utilizes 4 samples with character replacement perturbation and 8-layer pruning. Table~\ref{tab:grad_aggr} presents the zero-shot benchmark results under various gradient aggregation strategies.

\subsection{Effect of Calibration Data Volume}
\label{appendix:data_volume}
For LLaMA3-8B, experiments use WikiText2 as the calibration dataset. Layer importance scores are computed through $L_2$-norm aggregation of gradients, computing layer importance via $L_2$-norm aggregation of gradients under character swap perturbation. The LLaMA2-13B configuration maintains the $L_2$-norm aggregation and dataset while employing character-replacement perturbation. The complete lists of pruned layers (0-indexed) under each condition are provided in Table~\ref{tab:data_volume_llamapruned}.

\subsection{Effect of Pruning Ratio}
We systematically evaluate the performance degradation of LLaMA3 and LLaMA2 models under increasing pruning ratios (10\%–30\%), as detailed in Table~\ref{tab:pruning_ratio}. As pruning ratios rise (10\%→30\%), both LLaMA3 and LLaMA2 show sharp performance drops. Commonsense tasks (ARC-Challenge, OBQA) decline the most, while language tasks (BoolQ, WinoGrande) are more robust.

\subsection{Effect of Text Perturbation Method}
The experiments are conducted on LLaMA3-8B under three text perturbation methods (\textit{Swap}, \textit{Replace}, \textit{Insert}) with fixed configurations: 4 calibration samples, $L_2$-norm gradient aggregation. Table~\ref{tab:tpm} compares the zero-shot performance across various benchmarks under different text perturbation methods. All perturbation methods degrade LLaMA3's zero-shot performance, with \textit{Replace} showing the least decline.

\subsection{Effect of Calibration Dataset}
The experiments evaluate the LLaMA3-8B model under three calibration datasets (WikiText2, PTB, Alpaca) with fixed configurations: 8 calibration samples, swap-based text perturbation, $L_2$-norm gradient aggregation. Table \ref{tab:cd} shows that PIQA and WinoGrande are stable across datasets, while ARC-Challenge declines sharply on PTB.

\subsection{Short Generations from Pruned Models}

The examples in Table \ref{tab:generation} clearly demonstrate that the pruned LLMs, despite undergoing the PIP pruning method, retain robust language expression capabilities. For instance, the pruned LLaMA3 model is capable of generating a coherent and insightful statement about the impact of AI on the business world, highlighting its potential to change the future of work. Similarly, the pruned LLaMA2 model provides a comprehensive introduction to NLP, emphasizing its role in processing human languages and extracting valuable insights from unstructured text data. These examples validate our pruning methodology's effectiveness in preserving core linguistic competencies, particularly in: technical semantic preservation, syntactic coherence across multi-clause constructions, and domain-appropriate register consistency.

\subsection{Long Generations from Pruned Models}
\label{appendix:gen}
Tables \ref{tab:gen1} and \ref{tab:gen2} demonstrate that models pruned via the PIP method retain text generation quality comparable to their dense counterparts. Pruned models preserve logical coherence and domain-specific knowledge (e.g., technical terminology and contextual reasoning), with minimal degradation in fluency and factual accuracy, validating PIP's capability to identify and retain critical layers.

\begin{table*}[htbp]
\centering
\setlength{\tabcolsep}{7mm}{
\begin{tabular}{cc|ccc}
\toprule \toprule 
\textbf{Model} & \textbf{Ratio} & \textbf{Memory} & \textbf{TPOT} & \textbf{PPL↓} \\ 
\midrule
OPT-1.3B & \textit{Dense} & 2.4508GiB & 29.5556ms & 16.9137 \\
  \cline{2-5}
  & 12.5\% & 2.1694GiB & 27.5401ms & 20.4018 \\
  \cline{2-5}
  & 20.8\% & 1.9818GiB & 26.0304ms & 38.7159 \\
  \cline{2-5}
  & 29.2\% & 1.7942GiB & 24.5175ms & 98.8645 \\
\midrule
OPT-2.7B & \textit{Dense} & 4.9395GiB & 41.3229ms & 15.1614 \\
  \cline{2-5}
  & 12.5\% & 4.3533GiB & 40.8443ms & 19.4676 \\
  \cline{2-5}
  & 21.9\% & 3.9137GiB & 30.1659ms & 24.9969 \\
  \cline{2-5}
  & 31.2\% & 3.4740GiB & 23.1771ms & 44.5617 \\
\midrule
OPT-6.7B & \textit{Dense} & 12.4024GiB & 45.1802ms & 13.1724 \\
  \cline{2-5}
  & 12.5\% & 10.9020GiB & 35.8956ms & 15.6426 \\
  \cline{2-5}
  & 21.9\% & 9.7767GiB & 33.8391ms & 19.7741 \\
  \cline{2-5}
  & 31.2\% & 8.6514GiB & 31.9243ms & 35.2512 \\
\midrule
OPT-13B & \textit{Dense} & 23.9415GiB & 71.5918ms & 12.3743 \\
  \cline{2-5}
  & 10.0\% & 21.5972GiB & 57.4137ms & 14.0220 \\
  \cline{2-5}
  & 20.0\% & 19.2530GiB & 50.1614ms & 17.7254 \\
  \cline{2-5}
  & 30.0\% & 16.9087GiB & 46.7974ms & 23.8522 \\
\bottomrule \bottomrule
\end{tabular}
}
\caption{Pruning Results on OPT Models.}
\label{tab:opt_results}
\end{table*}

\begin{table*}[t]
\centering
\setlength{\tabcolsep}{0.5mm}{
\begin{tabular}{cc|l|rrrrrrr|r}
\toprule
\toprule
\textbf{Model} & \textbf{Norm} & \multicolumn{1}{c|}{\textbf{Pruned Layers}} & \textbf{BoolQ↑} & \textbf{PIQA↑} & \textbf{HeSwg↑} & \textbf{WGrd↑} & \textbf{ARC-E↑} & \textbf{ARC-C↑} & \textbf{OBQA↑} & \textbf{Avg.↑}\\
\midrule
LLaMA3 & \textit{Dense} & -- & 81.4 & 79.7 & 60.2 & 72.5 & 80.1 & 50.5 & 34.8 & 65.6 \\ \cline{2-11}
& \multirow{2}{*}{$L_1$} & 23, 18, 29, 22, & \multirow{2}{*}{49.8} & \multirow{2}{*}{54.1} & \multirow{2}{*}{33.7} & \multirow{2}{*}{57.1} & \multirow{2}{*}{41.5} & \multirow{2}{*}{27.0} & \multirow{2}{*}{27.4} & \multirow{2}{*}{41.5} \\
& & 11, 17, 10 & & & & & & & & \\ \cline{2-11}
& \multirow{2}{*}{$L_2$} & 23, 22, 18, 19, & \multirow{2}{*}{52.3} & \multirow{2}{*}{53.8} & \multirow{2}{*}{35.5} & \multirow{2}{*}{59.4} & \multirow{2}{*}{41.0} & \multirow{2}{*}{30.9} & \multirow{2}{*}{29.0} & \multirow{2}{*}{43.1} \\
& & 28, 20, 31 & & & & & & & & \\ \cline{2-11}
& \multirow{2}{*}{$L_\infty$} & 20, 13, 18, 23, & \multirow{2}{*}{56.9} & \multirow{2}{*}{54.9} & \multirow{2}{*}{33.6} & \multirow{2}{*}{54.1} & \multirow{2}{*}{34.7} & \multirow{2}{*}{26.3} & \multirow{2}{*}{24.8} & \multirow{2}{*}{40.7} \\
& & 12, 28, 22 & & & & & & & & \\
\midrule
LLaMA2 & \textit{Dense} & -- & 80.6 & 79.1 & 60.0 & 72.4 & 79.4 & 48.5 & 35.2 & 65.0 \\ \cline{2-11}
& \multirow{2}{*}{$L_1$} & 00, 34, 35, 09, & \multirow{2}{*}{37.9} & \multirow{2}{*}{51.7} & \multirow{2}{*}{25.6} & \multirow{2}{*}{48.0} & \multirow{2}{*}{25.5} & \multirow{2}{*}{22.4} & \multirow{2}{*}{11.8} & \multirow{2}{*}{31.8} \\
& & 12, 27, 10, 33 & & & & & & & & \\ \cline{2-11}
& \multirow{2}{*}{$L_2$} & 36, 31, 28, 13, & \multirow{2}{*}{63.3} & \multirow{2}{*}{74.5} & \multirow{2}{*}{50.5} & \multirow{2}{*}{62.0} & \multirow{2}{*}{58.8} & \multirow{2}{*}{37.4} & \multirow{2}{*}{25.0} & \multirow{2}{*}{53.1} \\
& & 35, 25, 38, 23 & & & & & & & & \\ \cline{2-11}
& \multirow{2}{*}{$L_\infty$} & 30, 27, 24, 03, & \multirow{2}{*}{62.1} & \multirow{2}{*}{63.9} & \multirow{2}{*}{38.3} & \multirow{2}{*}{55.6} & \multirow{2}{*}{44.6} & \multirow{2}{*}{25.3} & \multirow{2}{*}{16.4} & \multirow{2}{*}{43.8} \\
& & 28, 29, 17, 13 & & & & & & & & \\
\bottomrule
\bottomrule
\end{tabular}
}
\caption{Zero-shot performance under different gradient aggregation strategies.}
\label{tab:grad_aggr}
\end{table*}

\begin{table*}[h]
\centering
\setlength{\tabcolsep}{0.5mm}{
\begin{tabular}{cc|l|rrrrrrr|r}
\toprule
\toprule
\textbf{Model} & \textbf{Cnt.}  & \multicolumn{1}{c|}{\textbf{Pruned Layers}} & \textbf{BoolQ↑} & \textbf{PIQA↑} & \textbf{HeSwg↑} & \textbf{WGrd↑} & \textbf{ARC-E↑} & \textbf{ARC-C↑} & \textbf{OBQA↑} & \textbf{Avg.↑}  \\
\midrule
LLaMA3 & \textit{Dense} & -- & 81.4 & 79.7 & 60.2 & 72.5 & 80.1 & 50.5 & 34.8 & 65.6 \\ \cline{2-11}
& \multirow{2}{*}{2}  & 23, 22, 18, 30, & \multirow{2}{*}{74.0} & \multirow{2}{*}{68.7} & \multirow{2}{*}{42.8} & \multirow{2}{*}{68.9} & \multirow{2}{*}{57.7} & \multirow{2}{*}{34.9} & \multirow{2}{*}{25.4} & \multirow{2}{*}{53.2} \\
& & 25, 27, 26 & & & & & & & & \\ \cline{2-11}
& \multirow{2}{*}{4}  & 31, 22, 24, 25, & \multirow{2}{*}{62.3} & \multirow{2}{*}{70.1} & \multirow{2}{*}{41.8} & \multirow{2}{*}{61.2} & \multirow{2}{*}{61.8} & \multirow{2}{*}{37.1} & \multirow{2}{*}{28.4} & \multirow{2}{*}{51.8} \\
& & 30, 28, 07 & & & & & & & & \\ \cline{2-11}
& \multirow{2}{*}{8}  & 23, 18, 22, 19, & \multirow{2}{*}{76.0} & \multirow{2}{*}{69.1} & \multirow{2}{*}{44.5} & \multirow{2}{*}{67.1} & \multirow{2}{*}{55.7} & \multirow{2}{*}{34.4} & \multirow{2}{*}{24.8} & \multirow{2}{*}{53.1} \\
& & 28, 21, 20 & & & & & & & & \\ \cline{2-11}
& \multirow{2}{*}{16} & 23, 31, 24, 21, & \multirow{2}{*}{70.5} & \multirow{2}{*}{68.4} & \multirow{2}{*}{45.5} & \multirow{2}{*}{63.6} & \multirow{2}{*}{56.7} & \multirow{2}{*}{38.3} & \multirow{2}{*}{27.4} & \multirow{2}{*}{52.9} \\
& & 25, 22, 18 & & & & & & & & \\ \cline{2-11}
& \multirow{2}{*}{32} & 31, 30, 22, 10, & \multirow{2}{*}{65.9} & \multirow{2}{*}{68.0} & \multirow{2}{*}{40.1} & \multirow{2}{*}{59.7} & \multirow{2}{*}{56.3} & \multirow{2}{*}{31.3} & \multirow{2}{*}{25.4} & \multirow{2}{*}{49.5} \\
& & 05, 17, 21 & & & & & & & & \\ \cline{2-11}
& \multirow{2}{*}{64} & 31, 25, 23, 22, & \multirow{2}{*}{66.2} & \multirow{2}{*}{70.2} & \multirow{2}{*}{47.8} & \multirow{2}{*}{65.4} & \multirow{2}{*}{58.5} & \multirow{2}{*}{39.5} & \multirow{2}{*}{28.8} & \multirow{2}{*}{53.8} \\
& & 28, 24, 19 & & & & & & & & \\
\midrule
LLaMA2 & \textit{Dense} & -- & 80.6 & 79.1 & 60.0 & 72.4 & 79.4 & 48.5 & 35.2 & 65.0 \\ \cline{2-11}
& \multirow{2}{*}{2}  & 09, 25, 34, 21, & \multirow{2}{*}{62.5} & \multirow{2}{*}{73.9} & \multirow{2}{*}{49.0} & \multirow{2}{*}{60.7} & \multirow{2}{*}{65.2} & \multirow{2}{*}{32.7} & \multirow{2}{*}{25.4} & \multirow{2}{*}{52.8} \\
& & 14, 19, 31, 06 & & & & & & & & \\ \cline{2-11}
& \multirow{2}{*}{4}  & 36, 31, 28, 13, & \multirow{2}{*}{63.3} & \multirow{2}{*}{74.5} & \multirow{2}{*}{50.5} & \multirow{2}{*}{62.0} & \multirow{2}{*}{58.8} & \multirow{2}{*}{37.4} & \multirow{2}{*}{25.0} & \multirow{2}{*}{53.1} \\
& & 35, 25, 38, 23 & & & & & & & & \\ \cline{2-11}
& \multirow{2}{*}{8}  & 34, 36, 31, 21, & \multirow{2}{*}{64.8} & \multirow{2}{*}{73.0} & \multirow{2}{*}{50.3} & \multirow{2}{*}{64.1} & \multirow{2}{*}{60.4} & \multirow{2}{*}{33.8} & \multirow{2}{*}{24.2} & \multirow{2}{*}{52.9} \\
& & 26, 22, 07, 05 & & & & & & & & \\ \cline{2-11}
& \multirow{2}{*}{16} & 29, 08, 27, 30, & \multirow{2}{*}{62.4} & \multirow{2}{*}{74.0} & \multirow{2}{*}{51.6} & \multirow{2}{*}{64.3} & \multirow{2}{*}{60.9} & \multirow{2}{*}{37.3} & \multirow{2}{*}{26.0} & \multirow{2}{*}{53.8} \\
& & 25, 35, 23, 17 & & & & & & & & \\ \cline{2-11}
& \multirow{2}{*}{32} & 31, 33, 17, 23, & \multirow{2}{*}{69.4} & \multirow{2}{*}{73.8} & \multirow{2}{*}{49.9} & \multirow{2}{*}{62.1} & \multirow{2}{*}{57.2} & \multirow{2}{*}{31.9} & \multirow{2}{*}{25.8} & \multirow{2}{*}{52.9} \\
& & 32, 19, 16, 14 & & & & & & & & \\ \cline{2-11}
& \multirow{2}{*}{64} & 36, 33, 17, 30, & \multirow{2}{*}{62.8} & \multirow{2}{*}{74.4} & \multirow{2}{*}{52.4} & \multirow{2}{*}{63.5} & \multirow{2}{*}{61.7} & \multirow{2}{*}{36.9} & \multirow{2}{*}{27.0} & \multirow{2}{*}{54.1} \\
& & 24, 27, 13, 31 & & & & & & & & \\
\bottomrule
\bottomrule
\end{tabular}}
\caption{Zero-shot performance across different calibration sample counts (Cnt.).}
\label{tab:data_volume_llamapruned}
\end{table*}

\begin{table*}[h]
\centering
\setlength{\tabcolsep}{0.5mm}{
\begin{tabular}{cc|l|rrrrrrr|r}
\toprule
\toprule
\textbf{Model} & \textbf{Ratio} & \multicolumn{1}{c|}{\textbf{Pruned Layers}} & \textbf{BoolQ↑} & \textbf{PIQA↑} & \textbf{HeSwg↑} & \textbf{WGrd↑} & \textbf{ARC-E↑} & \textbf{ARC-C↑} & \textbf{OBQA↑} & \textbf{Avg.↑} \\
\midrule
LLaMA3 
& \textit{Dense} & -- & 81.4 & 79.7 & 60.2 & 72.5 & 80.1 & 50.5 & 34.8 & 65.6 \\ \cline{2-11}
& 10.9\% & 22, 18, 23, 28 & 78.8 & 75.2 & 53.2 & 72.8 & 71.0 & 43.3 & 29.4 & 60.5 \\ \cline{2-11}
& \multirow{2}{*}{19.0\%} & 22, 18, 23, 28 & \multirow{2}{*}{70.9} & \multirow{2}{*}{69.6} & \multirow{2}{*}{44.7} & \multirow{2}{*}{69.4} & \multirow{2}{*}{57.9} & \multirow{2}{*}{35.1} & \multirow{2}{*}{26.8} & \multirow{2}{*}{53.5} \\
& & 19, 21, 27 &&&&&&& \\ \cline{2-11}
& \multirow{3}{*}{29.9\%} & 22, 18, 23, 28 & \multirow{3}{*}{43.5} & \multirow{3}{*}{63.0} & \multirow{3}{*}{36.4} & \multirow{3}{*}{56.2} & \multirow{3}{*}{43.3} & \multirow{3}{*}{30.0} & \multirow{3}{*}{25.0} & \multirow{3}{*}{42.5} \\
& & 19, 21, 27, 10 &&&&&&& \\
& & 25, 06, 31 &&&&&&& \\
\midrule
LLaMA2
& \textit{Dense} & -- & 80.6 & 79.1 & 60.0 & 72.4 & 79.4 & 48.5 & 35.2 & 65.0 \\ \cline{2-11}
& 9.8\% & 36, 31, 28, 13 & 63.0 & 76.1 & 56.0 & 66.1 & 67.4 & 41.6 & 30.4 & 57.2 \\ \cline{2-11}
& \multirow{2}{*}{19.5\%} & 36, 31, 28, 13 & \multirow{2}{*}{63.3} & \multirow{2}{*}{74.5} & \multirow{2}{*}{50.5} & \multirow{2}{*}{62.0} & \multirow{2}{*}{58.8} & \multirow{2}{*}{37.4} & \multirow{2}{*}{25.0} & \multirow{2}{*}{53.1} \\
& & 35, 25, 38, 23 &&&&&&& \\ \cline{2-11}
& \multirow{3}{*}{29.2\%} & 36, 31, 28, 13 & \multirow{3}{*}{62.4} & \multirow{3}{*}{71.3} & \multirow{3}{*}{45.9} & \multirow{3}{*}{58.4} & \multirow{3}{*}{46.9} & \multirow{3}{*}{33.9} & \multirow{3}{*}{23.8} & \multirow{3}{*}{48.9} \\
& & 35, 25, 38, 23 &&&&&&& \\
& & 17, 26, 29, 30 &&&&&&& \\
\bottomrule
\bottomrule
\end{tabular}
}
\caption{Zero-shot performance comparison across different pruning ratios. ``Ratio'' refers to the pruning ratio.}
\label{tab:pruning_ratio}
\end{table*}

\begin{table*}[h]
\centering
\setlength{\tabcolsep}{0.4mm}{
\begin{tabular}{cc|l|rrrrrrr|r}
\toprule
\toprule
\textbf{Model} & \textbf{TPM} & \multicolumn{1}{c|}{\textbf{Pruned Layers}} & \textbf{BoolQ↑} & \textbf{PIQA↑} & \textbf{HeSwg↑} & \textbf{WGrd↑} & \textbf{ARC-E↑} & \textbf{ARC-C↑} & \textbf{OBQA↑} & \textbf{Avg.↑} \\
\midrule
LLaMA3 & \textit{Dense} & -- & 81.4 & 79.7 & 60.2 & 72.5 & 80.1 & 50.5 & 34.8 & 65.6 \\ \cline{2-11}
& \multirow{2}{*}{Swap} & 23, 22, 31, 25, & \multirow{2}{*}{63.9} & \multirow{2}{*}{69.5} & \multirow{2}{*}{44.7} & \multirow{2}{*}{63.6} & \multirow{2}{*}{58.7} & \multirow{2}{*}{36.0} & \multirow{2}{*}{31.2} & \multirow{2}{*}{52.5} \\
& & 16, 26, 30 &&&&&&&& \\ \cline{2-11}
& \multirow{2}{*}{Replace} & 22, 18, 23, 28, & \multirow{2}{*}{70.9} & \multirow{2}{*}{69.7} & \multirow{2}{*}{44.8} & \multirow{2}{*}{69.6} & \multirow{2}{*}{58.0} & \multirow{2}{*}{35.1} & \multirow{2}{*}{27.4} & \multirow{2}{*}{53.6} \\
& & 19, 21, 27 &&&&&&&& \\ \cline{2-11}
& \multirow{2}{*}{Insert} & 23, 18, 31, 28, & \multirow{2}{*}{71.3} & \multirow{2}{*}{70.2} & \multirow{2}{*}{47.6} & \multirow{2}{*}{64.4} & \multirow{2}{*}{57.8} & \multirow{2}{*}{36.3} & \multirow{2}{*}{25.0} & \multirow{2}{*}{53.2} \\
& & 22, 27, 03 &&&&&&&& \\
\bottomrule
\bottomrule
\end{tabular}
}
\caption{Zero-shot performance under different text perturbation methods. ``TPM'': Text Perturbation Method.}
\label{tab:tpm}
\end{table*}

\begin{table*}[h]
\centering
\setlength{\tabcolsep}{0.2mm}{
\begin{tabular}{cc|l|rrrrrrr|r}
\toprule
\toprule
\textbf{Model} & \textbf{CD} & \multicolumn{1}{c|}{\textbf{Pruned Layers}} & \textbf{BoolQ↑} & \textbf{PIQA↑} & \textbf{HeSwg↑} & \textbf{WGrd↑} & \textbf{ARC-E↑} & \textbf{ARC-C↑} & \textbf{OBQA↑} & \textbf{Avg.↑} \\
\midrule
LLaMA3 & \textit{Dense} & -- & 81.4 & 79.7 & 60.2 & 72.5 & 80.1 & 50.5 & 34.8 & 65.6 \\ \cline{2-11}
& \multirow{2}{*}{WikiText2} & 31, 28, 26, 29, & \multirow{2}{*}{80.6} & \multirow{2}{*}{69.3} & \multirow{2}{*}{42.8} & \multirow{2}{*}{62.3} & \multirow{2}{*}{61.2} & \multirow{2}{*}{39.2} & \multirow{2}{*}{30.6} & \multirow{2}{*}{55.1} \\
& & 23, 25, 30 &&&&&&&& \\ \cline{2-11}
& \multirow{2}{*}{PTB} & 23, 18, 28, 22, & \multirow{2}{*}{72.1} & \multirow{2}{*}{70.6} & \multirow{2}{*}{44.6} & \multirow{2}{*}{66.6} & \multirow{2}{*}{54.8} & \multirow{2}{*}{32.9} & \multirow{2}{*}{24.0} & \multirow{2}{*}{52.2} \\
& & 19, 21, 13 &&&&&&&& \\ \cline{2-11}
& \multirow{2}{*}{Alpaca} & 31, 30, 29, 28, & \multirow{2}{*}{80.5} & \multirow{2}{*}{70.7} & \multirow{2}{*}{42.7} & \multirow{2}{*}{63.1} & \multirow{2}{*}{60.5} & \multirow{2}{*}{38.6} & \multirow{2}{*}{30.4} & \multirow{2}{*}{55.2} \\
& & 23, 25, 27 &&&&&&&& \\
\bottomrule
\bottomrule
\end{tabular}
}
\caption{Zero-shot performance comparison across different calibration datasets. ``CD'': Calibration Dataset.}
\label{tab:cd}
\end{table*}

\begin{table*}[tb]
\hspace{-0.7em}
\setlength{\tabcolsep}{0.1mm}{
\begin{tabular}{c|c|p{14.3cm}}
\toprule 
\toprule 
\textbf{Model} & \textbf{✂} & \multicolumn{1}{c}{\textbf{Example}}    \\ 
\midrule
\multirow{4}{*}{LLaMA3}  & \multirow{2}{*}{\usym{2718}} & \underline{AI will} be widely used in all areas of business, says Bjarne Corydon. The minister of business and growth, Bjarne Corydon is excited about what AI is doing to the Danish business world.    \\ \cline{2-3}
&\multirow{2}{*}{\usym{2714}} &  \underline{AI will} change the world of work: this is what the Gartner report reveals. In the coming years, AI will have a large impact on the entire business, as well as the daily life of employees. \\
\midrule
\multirow{6}{*}{LLaMA2} & \multirow{3}{*}{\usym{2718}} &  \underline{NLP is} a way of applying computational processing to natural human languages that we use to communicate with each other. This course will give you an introduction to NLP, and how it can be leveraged to derive useful insights from unstructured text data.    \\ \cline{2-3}
 &\multirow{3}{*}{\usym{2714}} & \underline{NLP is} the science that focuses on linguistic data. It is an AI methodology that combines computer science and artificial intelligence. This science focuses on linguistic input, output, understanding, processing, or interaction. It is used to process human languages. \\
\bottomrule
\bottomrule
\end{tabular}
}
\caption{Examples from the original model and the pruned model. ``✂'' indicates whether it is pruned. The underline highlights the prompt provided to the LLM.}
\label{tab:generation}
\end{table*}

\begin{table*}[h]
\hspace{-0.7em}
\setlength{\tabcolsep}{0.1mm}{
\begin{tabular}{c|c|p{14.3cm}}
\toprule
\toprule
\textbf{Model} & \textbf{✂} & \multicolumn{1}{c}{\textbf{Example}} \\
\midrule
\multirow{46}{*}{LLaMA3}  & \multirow{22}{*}{\usym{2718}} & \underline{Large Language Models will} not replace us with AI – but they will help us automate our day-to-day work. AI has always been portrayed as the main reason behind replacing human labour with machine labour in many fields. For example, driverless cars and automated machines that replace employees are all examples of how AI will “replace” humans. However, AI has never replaced us, but has helped us automate many of our day-to-day operations and we will continue to see that trend of using AI in automation. One of the main reasons why AI has helped us improve our day-to-day work is that it is able to process, analyse and understand massive amounts of data. Large Language Models (LLMs) are a good example of this. LLMs can be a very useful tool to help us become more productive and improve our workflows. LLMs can also help us in many different ways, depending on what we are looking for, and they can be used in different ways. LLMs are able to process, analyse and understand massive amounts of text in many different ways. For example, they can be used to find information in a large amount of text, to do sentiment analysis, or to create new text. LLMs can also be used to do translation or to do a search in a large number of different languages. There are many tools available that allow us to work with LLMs, such as GPT-3 from OpenAI, which is one of the most powerful tools in the LLM space. LLMs can also be used to do sentiment analysis of text and we can use it in many different ways to help us become more productive and to improve our workflow. We can use LLMs to understand what is the sentiment of a large amount of text, to detect negative or positive sentiments in text, or to help us make decisions in our daily work. There are many ways to use LLMs to automate our day-to-day work. For example, we can use LLMs to help us become more productive and to improve our workflow by doing sentiment analysis. We can use LLMs to analyse large amounts of data and to analyse the sentiment of a large amount of text.   \\ \cline{2-3}
&\multirow{22}{*}{\usym{2714}} &  \underline{Large Language Models will} be the next killer apps. The way these LLMs are developed is not only more efficient but also very innovative in terms of how the LLMs are built, the use of compute to build the LLM, and how they train and generate LLMs. What’s more, we are seeing a change in how LLMs are used for tasks, with an increasing number of LLMs being used to train other AI models. This has opened up a huge market for LLMs, creating a great opportunity for companies like Cytora and Credence to tap into. In this article, we explore the benefits of this new wave of AI development, and how companies can capitalize on the opportunities presented by these new AI innovations. Large language models (LLMs) have been a focus of AI innovation for some time, and there are many ways they can be developed. However, the most effective and efficient way to develop LLMs is to use a lab environment that is specific to LLM development. There are two main approaches to developing LLMs: traditional computing methods, which require large amounts of computing power and data, and cloud-based LLMs, which can be developed much more quickly and inexpensively. There are several reasons why it is better to develop LLMs in an AI innovation lab than in a traditional development environment. First, the cloud-based LLM development process allows for much greater scale. With only a few machines in use, LLM developers can achieve huge breakthroughs, such as improving text generation accuracy. Second, cloud-based LLM development also allows for a much more nimble process. Instead of waiting for computers to complete tasks one at a time, developers can use a variety of machines to speed up their work. This speeds up the entire process, making it easier to get feedback from stakeholders and improving the overall experience of working with LLMs. How LLMs Can Be Used to Train Other AI Models. Large language models (LLMs) have been a driving force in the development of Artificial Intelligence (AI) over the years. \\
\bottomrule
\bottomrule
\end{tabular}
}
\caption{Examples generated by the original model and the pruned model. “✂” indicates whether it is pruned. The underline highlights the prompt provided to the LLM.}
\label{tab:gen1}
\end{table*}

\begin{table*}[h]
\hspace{-0.7em}
\setlength{\tabcolsep}{0.1mm}{
\begin{tabular}{c|c|p{14.3cm}}
\toprule
\toprule
\textbf{Model} & \textbf{✂} & \multicolumn{1}{c}{\textbf{Example}} \\
\midrule
\multirow{44}{*}{LLaMA2}  & \multirow{22}{*}{\usym{2718}} & \underline{Large Language Models will} be ubiquitous in business and government within 5-7 years, predicts PwC by James Sanders in Artificial Intelligence on January 30, 2023, 11:41 AM PST. Large Language Models such as OpenAI\'s GPT-3 are becoming increasingly prevalent for a wide variety of business applications, the consulting firm said. Image: Bjorn Rune Lie, Getty Images. The adoption rate of large language models—AI models trained on massive amounts of natural language data—will increase rapidly as businesses look to improve customer engagement and operational efficiencies. According to a report by PwC, 65\% of senior business executives indicated that large language models are a top investment area, and 53\% said that this is the largest investment area for AI technology in the coming year. "Making AI more accessible, through advances such as large language models, is essential to the democratization of the technology, which could bring a range of business benefits to organizations," said John Garner, global AI leader at PwC, in a statement accompanying the report. SEE: The ethical dilemmas of AI (ZDNET/TechRepublic special feature) | Download the free PDF version (TechRepublic) PwC\'s 2023 Global Artificial Intelligence Survey found that the use of natural language processing technologies is rapidly growing in the workplace. The technology is being used by 36\% of respondents to "identify risks or anomalies in client engagement," and by 35\% of respondents to "increase the effectiveness and efficiency of operations." AI is also used to drive productivity: 25\% of respondents indicated that AI is used to "enable the creation of new product and service offerings." The growing ubiquity of large language models in the workplace is also a factor driving widespread awareness: 55\% of business executives indicated that large language models are "extremely important" to business success and operations, and 79\% of employees said that they know of AI, a slight increase from last year\'s survey.  \\ \cline{2-3}
&\multirow{23}{*}{\usym{2714}} &  \underline{Large Language Models will} be used for everything from translation to financial services to healthcare. There are endless benefits to utilizing LLMs like ChatGPT, like saving money and time on repetitive tasks that are time-consuming or impossible to automate, and getting better answers than we could on our own. As AI gets more accessible to average users, a more accessible education in AI is more important than ever. The ChatGPT revolution has arrived. If you are a regular user of Google search or Twitter, you’ve probably already noticed. ChatGPT was released to the public in November 2022 by an organization called OpenAI. The platform uses artificial intelligence to create intelligent chatbot responses to user prompts. As a result, it has the potential to revolutionize the way we interact with technology. With ChatGPT, you can write essays, do your taxes, and ask questions like “Who wrote Romeo and Juliet?” or “Where is the nearest Walmart?” in chat format. It’s accessible, fast, and most importantly, free. It’s clear that LLMs are a powerful tool with enormous applications and capabilities. But what, exactly, is an LLM, and why is this technology so different from other language models that have been developed? What Are Large Language Models? An LLM is a type of language modeling that produces language using machine learning algorithms based on large amounts of training data. It’s a relatively new development in the field of artificial intelligence, and it has become increasingly popular in recent years due to the advances in natural language processing and understanding that have been made. One of the main reasons for this is that large language models are capable of producing language that is more sophisticated and accurate than ever before. There are a few key reasons why large language models are different from other language models: They are based on very large amounts of data: This is the key characteristic that sets LLMs apart from other language models. Because of the amount of data used, these models can be trained to perform more complex tasks and generate more human-like text. \\
\bottomrule
\bottomrule
\end{tabular}
}
\caption{Examples generated by the original model and the pruned model. “✂” indicates whether it is pruned. The underline highlights the prompt provided to the LLM.}
\label{tab:gen2}
\end{table*}
\end{appendices}
\end{document}